\renewcommand{\raggedright}{\leftskip=0pt \rightskip=0pt plus 0cm}
\renewcommand{\raggedright}{\leftskip=0pt \rightskip=0pt plus 0cm}
\newcommand{\etal}{\emph{et~al.~}}
\newtheorem{defi}{Definition}[section]
\newtheorem{lemm}{Lemma}[section]
\newtheorem{theo}{Theorem}[section]
\newcommand{\Rmnum}[1]{\expandafter\@slowromancap\romannumeral #1@}
\begin{document}

\title{An Online Ride-Sharing Path Planning Strategy for Public Vehicle Systems}

\author{Ming~Zhu,
        ~Xiao-Yang~Liu,
        ~and~Xiaodong~Wang,~\IEEEmembership{Fellow,~IEEE}
\thanks{M.~Zhu is with the Shenzhen Institutes of Advanced Technology, Chinese Academy of Sciences, Shenzhen, China, and the Department of Computer Science and Engineering, Shanghai Jiao Tong University, Shanghai, China, E-mail: zhumingpassional@gmail.com, zhumingpassional@sjtu.edu.cn.} 
\thanks{X.-Y.~Liu is with the Electrical Engineering Department, Columbia University, New York City, US, and the Department of Computer Science and Engineering, Shanghai Jiao Tong University, Shanghai, China. E-mail: xiaoyang@ee.columbia.edu.}
\thanks{X.~Wang is with the Department of Electrical Engineering, Columbia University, New York, NY, 10027, USA (E-mail: wangx@ee.columbia.edu).}

}
%

\maketitle
%
%
%
%

\begin{abstract}
  As efficient traffic-management platforms, public vehicle (PV) systems are envisioned to be a promising approach to solving traffic congestions and pollutions for future smart cities. PV systems provide online/dynamic peer-to-peer ride-sharing services with the goal of serving sufficient number of customers with minimum number of vehicles and lowest possible cost. A key component of the PV system is the online ride-sharing scheduling strategy. In this paper, we propose an efficient path planning strategy that focuses on a limited potential search area for each vehicle by filtering out the requests that violate passenger service quality level, so that the global search is reduced to local search. We analyze the performance of the proposed solution such as reduction ratio of computational complexity. Simulations based on the Manhattan taxi data set show that, the computing time is reduced by 22\% compared with the exhaustive search method under the same service quality performance.
\end{abstract}

\begin{IEEEkeywords}
Path planning problem, potential search area, public vehicle systems, online/dynamic peer-to-peer ride-sharing.
\end{IEEEkeywords}

\IEEEpeerreviewmaketitle

\section{Introduction}\label{Sec:Introduction}

Existing transportation systems are not satisfying due to large investments, high congestions, serious pollutions and potential health problems \cite{litman2013transportationHealth}. We hope that our transportation policies and programs can serve our economic, social, and environmental goals. Some research shows that the total industrial and consumer expenditure on transportation is about 10\% of GDP (gross domestic product) in the world \cite{pisarski2008transportation}. In the USA, around 16.7\% of the household income is spent on transportation \cite{CommutingAmerica}. Besides the large investments, many countries are suffering from social problems such as traffic congestions \cite{zhu2013lane-change} and fuel pollution-related diseases \cite{hystad2013AirPollutionLungCancer} brought by transportation. The vehicle fuel pollution accounts for  31\% of all pollution in Beijing \cite{zhu2015TrafficEfficiency}, and the Beijing city government has spent \$277 billion during 2011-2013 \cite{zhu2015TrafficEfficiency} on air pollution. However, the PM2.5 (particulate matter with diameter of 2.5 micrometers or less) level is still high \cite{BeijingAirQualityIndex}. To address the above problems, ride-sharing is a promising solution that is cost-effective.

As an application of sharing economy \cite{hamari2016sharing-economy}, public vehicle (PV) systems \cite{zhu2016PublicVehicle} \cite{zhu2017JointTransportationCharging} \cite{zhu2016PSA-IWQoS} \cite{zhu2015PublicVehicle} provide low-cost peer-to-peer ride-sharing trips with ensured quality of service (QoS) for passengers. As shown in Fig.~\ref{Fig:PVArchitecture}, a PV system consists of three parts: a cloud, PVs, and passengers. The blue solid lines denote the communications between them, and the dash black lines imply the scheduling task of a PV. In addition, future PVs are envisioned to be self-driving electric vehicles, therefore charging \cite{zhu2014ChargingScheduling} is an important issue.

\begin{figure}[tbp]
  \centering
  \includegraphics[height=0.7\linewidth,width=0.87\linewidth]{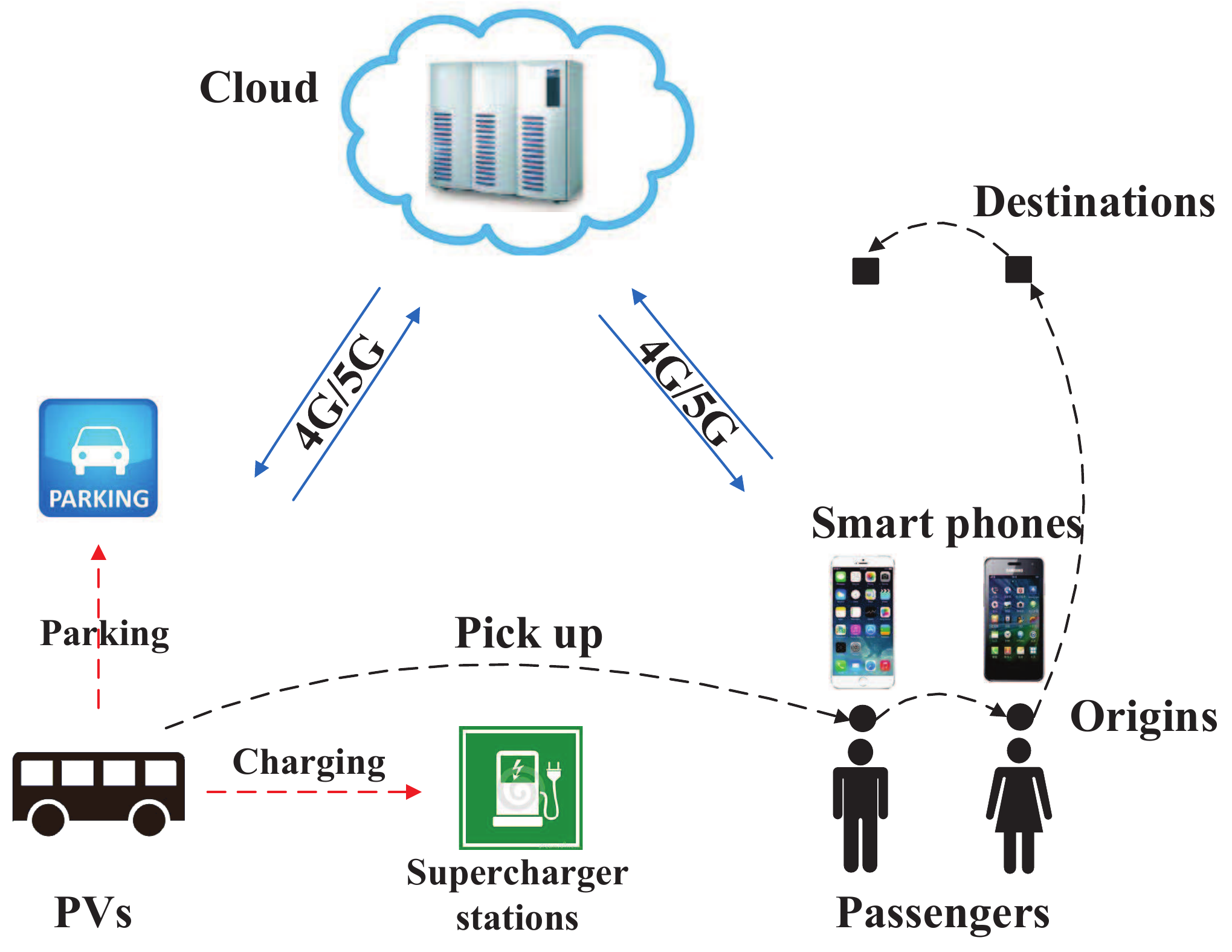}
  \caption{Architecture of PV systems.}
  \label{Fig:PVArchitecture}
\end{figure}

The operation of a PV system is as follows. If a passenger needs a trip service, he/she sends a request (including the earliest start time, the origin, and the destination, etc.) through a mobile internet device (e.g., smart phone) to a cloud. The cloud computes a path plan with a confirmed ride match to serve him/her. He/She can access the information of the confirmed PV through apps, e.g., vehicle ID, position, speed, and path. PVs serve passengers by traversing from origins (pick-up points) to destinations (drop-off points). Compared with conventional public transportation systems such as buses and subways which also provide ride-sharing services, there is no last mile problem in PV systems, and PVs are more flexible since their paths are adapted to the trip demands of passengers, while the paths and the schedules of buses and subways are fixed.

PV systems are also different from existing ride-sharing systems (e.g., Uber Pool \cite{Uber} and Didi \cite{Didi}). Here, we describe several important differences. 1) PV systems are centralized systems which provide online/dynamic peer-to-peer ride-sharing services, while existing ride-sharing systems such as Uber Pool are distributed offline/static systems. 2) In PV systems, scheduling strategies are calculated by the cloud, while in existing ride-sharing systems, scheduling strategies are negotiated by drivers and riders. 3) PVs cooperate with each other to improve traffic efficiency, e.g., multi-hop ride-sharing paths \cite{zhu2016transfer}, while in existing ride-sharing systems, drivers compete for more profit.

In this paper, we investigate the efficient ride-sharing path planning problem in PV systems. The existing solutions \cite{zhu2016PublicVehicle} \cite{shang2016collective-travel-planning} \cite{jung2016Shared-Taxi} become inefficient since most of them are based on exhaustive search, and only some of them \cite{ma2015taxiRidesharing} considers computational efficiency while the QoS can not be guaranteed. We study this problem in a practical setting by exploiting QoS constraints and geometry, and then propose a local search solution.

The challenges of our problem are as follows. 1) There is a trade-off between the objectives of passengers and PVs: PVs try to serve more passengers with the minimum energy cost or travel distance; where passengers want to arrive at their destinations as early as possible with low cost. 2) The online/dynamic ride-sharing in PV systems involves multiple passengers' utilities. If the utility of a passenger is compromised by serving some other passengers, e.g., a long detour, he/she would choose other type of service such as taxi. 3) When the cloud receives new trip requests, new schedules will be calculated and paths of some PVs may change.

The contributions of this paper include the following.
\begin{itemize}
\item{}
  To the best of our knowledge, this is the first work that considers computational efficiency in online/dynamic ride-sharing for PV systems aiming at reducing the travel distance of vehicles with QoS guarantee.
\end{itemize}

\begin{itemize}
\item{}
  We propose an efficient path planning strategy by restricting the search areas for PVs, reducing the global search to local search. It is suitable for real-time implementation.
\end{itemize}

\begin{itemize}
\item{}
  We analyze the performance of the proposed solution, and find that the larger is the city, the better performance (reduction ratio of computational complexity) our proposed solution will have.
\end{itemize}

\begin{itemize}
\item{}
  The strategy balances the utilities of passengers and PVs, i.e., providing high QoS (e.g., short waiting time, and less detour) for passengers with low energy cost.
\end{itemize}

\begin{itemize}
\item{}
  We perform simulations with the trip requests based on the Manhattan taxi data set to evaluate the proposed strategy. A large amount of computation can be saved. The travel distance of PVs is reduced, and the ride-sharing for passengers can be guaranteed compared with privately owned electric vehicles.
\end{itemize}

Related work on ride-sharing path planning problems in PV or PV-like systems is described in Section \ref{Sec:RelatedWork}. In Section \ref{Sec:PathPlanningProblem}, we present the problem formulation. Section \ref{Sec:Approach} and Section \ref{Sec:Analysis} detail the solution and its performance analysis respectively. Section \ref{Sec:Performance} provides simulation results. Section \ref{Sec:Conclusion} concludes this paper.

\section{Related Work} \label{Sec:RelatedWork}


Some ride-sharing path planning strategies are restricted to some special cases such as common origins or destinations. In particular, Massobrio \etal \cite{massobrio2016taxi-sharing} propose evolutionary algorithms to solve the one-origin-multi-destination taxi-sharing problem, where the QoS metrics include the total trip cost and time delay of passengers. Naoum-Sawaya \etal \cite{naoum2015stochastic} present a stochastic mixed integer programming model to optimize the allocation of cars to employees (from homes to work places) while taking into account the unforeseen events of car unavailability. It only focuses on ride-sharing with common destinations in large organizations, e.g., companies, hospitals, and universities. Shang \etal \cite{shang2016collective-travel-planning} propose a collective travel planning query to find the lowest cost path connecting multiple origins and a destination with limited number of meeting points. However, all the above solutions can not be used in the multi-origin-multi-destination scenario, which is more common in the real world.

Most of current ride-sharing path planning strategies in the multi-origin-multi-destination scenario are exhaustive search methods, which incur high computational load at the cloud. Zhu \etal \cite{zhu2016PublicVehicle} propose an algorithm to reduce travel distance of vehicles with QoS constraints for passengers, e.g., detour, which needs to try each request and calculate corresponding detour constraints. Goel \etal \cite{goel2017ridesharing} propose a solution which selects the optimal fixed positions of pick-up points to maximize the vehicle occupancy rates while preserving the passenger privacy and safety, where passengers do not need to provide their precise home or work positions. To maximize the vehicle occupancy and minimize the travel time with limited detour, Jung \etal \cite{jung2016Shared-Taxi} propose a hybrid-simulated annealing to dynamically assign passenger requests for online/dynamic ride-sharing, which is computationally expensive especially when the number of requests is large since multiple random perturbations and a large number of iterations are needed.

Less work focuses on how to reduce the computational complexity and restrict the search areas in online/dynamic ride-sharing in PV or PV-like systems \cite{ma2015taxiRidesharing}. To reduce the total travel distance of taxis, Ma \etal \cite{ma2015taxiRidesharing} propose an efficient ride-sharing path planning solution in serving dynamic queries, where ``lazy shortest path" calculation is used by means of partitioning the whole road network into multiple grids, and the status of each vehicle should be updated according to preset intervals. This work only considers how to reduce the computational complexity based on the current distance between vehicles and origins/destinations of requests. Ota \etal \cite{ota2016TaxiRideSharing} propose a real-time and data-driven simulation framework to achieve the efficient analysis of taxi ride sharing by means of exploring parallelism and cache-coherent shortest path index, which also considers different stakeholders' interests and constraints, e.g., the waiting time, the maximum number of additional stops, the maximum number of shared trips.

In summary, most online/dynamic ride-sharing solutions in PV or PV-like systems are based on computationally expensive exhaustive search. Only a few solutions focus on reducing the computational complexity \cite{ma2015taxiRidesharing} \cite{ota2016TaxiRideSharing}. However, the road network should be divided into cells and the states of vehicles should be frequently updated, making the methods complicated. Our proposed solution in this paper is more efficient, easy to implement, and can be used in large cities. Moreover, the QoS level of passengers can be guaranteed, e.g., short waiting time and less detour.

\section{PV Ride-Sharing Planning Problem} \label{Sec:PathPlanningProblem}

In this section, we first present preliminaries for our problem, and then detail the problem formulation and NP-completeness.

\subsection{Preliminaries}

We assume that all PVs in a city constitute a set $\mathcal{P}$, and all requests constitute a set $\mathcal{R}$. Let $\mathcal{R}_u$ be a set of unscheduled requests, and $\mathcal{R}_s$ be a set of scheduled requests. We assume that once the ride match between any request $r$ and PVs has been confirmed by the cloud, it does not change. Let $\mathcal{R}_s = \mathcal{R}_{s,1} \bigcup \mathcal{R}_{s,2}$, where $\mathcal{R}_{s,1}$ is a set of requests being served (has been picked up yet not dropped off) by PVs and $\mathcal{R}_{s,2}$ is the set of requests waiting to be served (the ride match has been confirmed yet not picked up). Clearly, $\mathcal{R} = \mathcal{R}_u \bigcup \mathcal{R}_s$.

In the PV ride-sharing planning (PVRSP) problem, each request $r \in \mathcal{R}$ should be served by a corresponding PV, while on the path of any PV, it can serve other unscheduled requests if there exist available seats. Let $r = (n, t, o, d) \in \mathcal{R}$ denote a trip request, where $n$ is the number of passengers, $t$ is the earliest start time, $o$ is the origin (pick-up point), and $d$ is the destination (drop-off point). We assume the passengers using the same request should be served together by a PV. For request $r_1 \in \mathcal{R}_{s,1}$, it has been picked up so that its origin is not important anymore and only its destination should be reached. However, for request $r_2 \in \mathcal{R}_{s,2}$, both the origin and the destination should be reached with the origin preceding the destination. Let $p \in \mathcal{P}$ denote a PV or its current position.

Next, to describe our problem clearly, we introduce three definitions: schedule, service list, and path. And then we present an example to discuss their changes in the execution of a path planning strategy in PV systems.

\begin{defi} \label{definition:Schedule}
The schedule for any request $r = (n, t, o, d)$, means that if a PV $p$ is scheduled to $r$ with the earliest start time $t$, $p$ will transverse through the origin $o$ and the destination $d$ with $o$ preceding $d$, and meanwhile $p$ will pick up $r$ ($n$ passengers) at $o$, and drop off $r$ at $d$.
\end{defi}

In Definition~\ref{definition:Schedule}, a schedule mainly determines the ride match between the request and PVs, and the precedence constraints between the origin and the destination. The schedule for a request makes sure that this request will be transported from the origin to the destination.

\begin{defi} \label{definition:ServiceList}
The service list of any PV $p$, denoted by $L_p$, is a list of requests $p$ has to serve, including the requests being served (have been picked up by $p$, yet have not arrived at their destinations), and the requests waiting to be served (scheduled to $p$, yet have not been picked up), while the requests dropped off by $p$ are not included.
\end{defi}

In Definition \ref{definition:ServiceList}, the service list $L_p$ of PV $p$ clearly points out which requests should be served and implies where to pick up or drop off corresponding passengers. Let $\mathcal{L}$ be a set of service lists of all PVs. $L_p$ has two parts, the requests being served, and the requests waiting to be served by $p$. Once a new request is assigned to PV $p$, it will be put to the service list. Once a scheduled request is dropped off at its destination, it will be removed from the service list. We see that, the service list is determined by both the ride-sharing planning strategy and the motion states of PVs, and it dynamically changes over time.

\begin{defi} \label{definition:Path}
The path of PV $p$, denoted by $Q_p$, is a sequence of points $p$ has to serve, which includes the current position of this PV, the destinations of the requests being served by it and the origin-destination pairs of the requests waiting to be served by it.
\end{defi}

In Definition \ref{definition:Path}, we see that the path of each PV dynamically changes over time, and is composed of the current position of this PV, the origin-destination pairs of requests waiting to be served and the destinations of requests being served. Let $\mathcal{Q}$ be a set of paths of all PVs. All the important variables and notations in this article are summarized in Table~\ref{Tab:Variables1}, where PSA is the abbreviation of ``potential search area" which will be detailed in the subsequent sections, and the unit of $D(i,j)$, $E(i,j)$, $T_r$, $b$, and $B$ is km.

\begin{table}[tbp]
  \centering
  \caption{Variables and notations}\label{Tab:Variables1}
  \begin{tabular}{ l  l}
  \toprule
  $n$ & number of passengers of $r$. \\
  $t$ & earliest start time of $r$. \\
  $o$ & origin of $r$. \\
  $d$ & destination of $r$. \\
  $\delta$ & detour ratio of request $r$.\\
  $\Delta$ & maximum detour ratio.\\
  $b$ & buffer distance of request $r$.\\
  $B$ & buffer distance threshold.\\
  $w$ & waiting time of request $r$.\\
  $W$ & waiting time threshold.\\
  $C$ & capacity of PVs.\\
  $L_p$ & service list of PV $p$.\\
  $Q_p$ & path of PV $p$.\\
  $D(i,j)$ & shortest path distance from position $i$ to $j$.\\
  $E(i,j)$ & Euclidean distance from position $i$ to $j$.\\
  $T_r$ & travel distance of request $r$.\\
  $\mathcal{P}$ & a set of all PVs in a city. \\
  $p$ & a PV.\\
  $p_s$ & position of PV $p$ at the schedule time of $r$.\\
  $r$ & a request. \\
  $\mathcal{R}$ & a set of all requests.  \\
  $\mathcal{R}_s$ & a set of scheduled requests. \\
  $\mathcal{R}_{s,1}$  &  a set of requests being served.\\
  $\mathcal{R}_{s,2}$ & a set of requests waiting to be served.\\
  $L_p$ & service list of PV $p$.\\
  $\mathcal{L}$ & a set of service lists of all PVs.\\
  $Q_p$ & path of PV $p$.\\
  $\mathcal{Q}$ & a set of paths of all PVs.\\
  $\eta$ & ratio of PSA and optimal PSA ($\text{PSA}^{\text{opt}}$). \\
  $\hat{A}_1$/$A_1$ & PSA of $p$ if the furthest request $r$ has been picked up,\\
   & and $A_1 = |\hat{A}_1|$. \\
  $\hat{A}_2$/$A_2$ & PSA of $p$ if the furthest request $r$ has not been picked\\
   & up, and $A_2 = |\hat{A}_2|$. \\
  $\hat{A}$/$A$ & PSA of PV $p$, and $A = |\hat{A}|$.\\
  $\hat{A}^{\text{opt}}$/$A^{\text{opt}}$ & $\text{PSA}^{\text{opt}}$ of $p$, and $A^{\text{opt}} = |\hat{A^{\text{opt}}}|$. \\
  $\hat{\beta}$/$\beta$ & PSA determined by $(o, d)$, and $\beta = |\hat{\beta}|$.\\
  $\hat{\beta}^{\text{opt}}$/$\beta^{\text{opt}}$ &  $\text{PSA}^{\text{opt}}$ determined by $(o, d)$, and $\beta^{\text{opt}} = |\hat{\beta}^{\text{opt}}|$.\\
  $\hat{\alpha}$/$\alpha$ & PSA determined by $(p_s, o)$, and $\alpha = |\hat{\alpha}|$. \\
  $\hat{\alpha}^{\text{opt}}$/$\alpha^{\text{opt}}$ & $\text{PSA}^{\text{opt}}$ determined by $(p_s, o)$, and $\alpha^{\text{opt}} = |\hat{\alpha}^{\text{opt}}|$.\\
  \bottomrule
  \end{tabular}
\end{table}

We take an example to illustrate the changes of the schedule, path, and service list. We assume the previous service list of PV $p$ is $\{ 1, 2, 3 \}$, and the previous path is $\{ x \rightarrow o_1 \rightarrow o_2 \rightarrow o_3 \rightarrow d_2 \rightarrow d_1 \rightarrow d_3 \}$ where $x$ is the current position of $p$. The change of path has two cases. \emph{First}, the service list does change while the path changes. For example, after a certain time, the path becomes $\{ x' \rightarrow o_2 \rightarrow o_3 \rightarrow d_2 \rightarrow d_1 \rightarrow d_3 \}$ where $x'$ is the new position of $p$. \emph{Second}, both the service list and the new path change. For example, after a certain time, the cloud receives a new request $r_4$, and calculates the new ride-sharing path, and finally assigns $p$ to serve $r_4$. The service list of $p$ becomes $\{ 1, 2, 3, 4 \}$, and the path becomes $\{ x' \rightarrow o_1 \rightarrow o_2 \rightarrow o_3 \rightarrow o_4 \rightarrow d_2 \rightarrow d_1 \rightarrow d_3 \rightarrow d_4 \}$.

\subsection{Problem Formulation and NP-Completeness}

To improve the profits, PVs try to serve more passengers with less travel distance; however, this reduces the user satisfaction. To improve the service quality, passengers want to arrive their destinations as early as possible, i.e., to reduce the waiting time and travel time. To balance the utilities of PVs and passengers, we consider the total utilities of both sides to achieve the best social welfare.

The detour ratio of a request is considered as the passenger QoS. We assume $p$ serves $r$. With respect to request $r$, the actual travel distance $T_r$ is not shorter than $D(o, d)$, the shortest path distance from the origin $o$ to the destination $d$. The detour ratio is defined as $\delta = \left( T_r - D(o, d) \right) \, / \, D(o, d)$. The QoS constraint is then $\delta \leq \Delta$ where $\Delta$ is the maximum detour ratio set by the cloud, which aims at preserving the comfort \cite{zhu2015TrafficEfficiency} of passengers.

The latest arrival time is not considered here for several reasons. 1) It is hard to accurately predict the speed due to traffic congestions, emergencies, and accidents, although some researchers have proposed new solutions \cite{polson2017traffic-prediction}. 2) It is hard to ensure that a passenger can arrive at his/her destination before the latest arrival time he/she set, even taking a car or taxi.

The PVRSP problem is formulated as follows. Given a set of PVs $\mathcal{P}$ on the road networks with the current service lists $\mathcal{L}$ and paths $\mathcal{Q}$, a set of scheduled requests $\mathcal{R}_{s,2}$ waiting to be served, and a set of unscheduled requests $\mathcal{R}_u$, the cloud determines the new service lists $\mathcal{L'}$ and new paths $\mathcal{Q'}$ of all PVs ensuring that all requests can be served from origins to destinations with QoS constraints (the detour ratio is no larger than the maximum value $\Delta$), which aims to reduce the total travel distance of PVs and the waiting time of passengers with QoS guarantee.

Now, we discuss the NP-completeness of the PVRSP problem. It is known to all that the dial-a-ride problem (DARP) \cite{masmoudi2017DARP-GA} is NP-complete \cite{masmoudi2017DARP-GA}, which is a special case of the PVRSP problem. Therefore, the PVRSP problem is also NP-complete.

Here, we mention the differences between the two problems PVRSP and DARP. In the DARP problem, all vehicles are based at a single depot, whereas, in the PVRSP problem, vehicles are distributed at different locations. The DARP problem is an offline/static ride-sharing process, i.e., all requests are known and all passengers stay at their origins waiting to be served, and it does not consider serving other unscheduled requests on the vehicle paths. On the other hand, the PVRSP problem focuses on online/dynamic ride-sharing so that more complex scenarios should be considered, e.g., the precedence constraints between current positions of PVs and the origin-destination pair of each request, and the predetermined match between each scheduled request and the corresponding PV.

\section{Proposed Solution} \label{Sec:Approach}

In this section, we first introduce some heuristics to restrict the potential search areas for PVs, and then describe a key routine of calculating the insertion cost of an origin-destination pair, and then propose an algorithm for online updating PV ride-sharing paths, and finally use an example to illustrate the algorithm.

\subsection{Heuristics} \label{Subsec:Heuristics}

The potential search area (PSA), is an area that points out that the origins or/and destinations of requests in this area are possible to served with ensured QoS (denoted by detour), while the other requests violating QoS constraints are not in this area and should be excluded.

We name $r$ as the \emph{furthest request} if its destination $d$ is the last point on the path of the PV which serves $r$. In the following, we discuss PSA in two conditions, i.e., if the furthest request has been picked up or not. Then, we first describe two cases of PSA for a request, and then derive the PSA of a PV. Here, the PSA of a request and the PSA of a PV are two distinct concepts, while the latter is derived from the former.

\vspace{0.06in}
\emph{1) The furthest request has been picked up}
\vspace{0.06in}

We assume $p$ serves $r$. As shown in Fig.~\ref{Fig:PSATwoPoints1}, if we insert new points (origins or destinations of other requests) between $p$ and $d$. we should restrict an area such that $\delta$ (the detour ratio of $r$) should not exceed its maximum value $\Delta$.

\begin{figure}
  \centering
  \includegraphics[height=0.31\linewidth,width=0.59\linewidth]{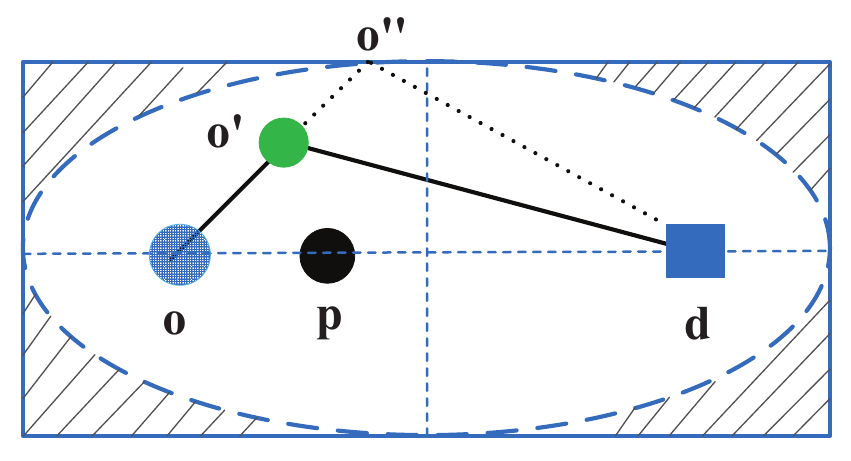}
  \caption{PSA determined by the furthest request $r$ which has been picked up: $\hat{A}_1 = \hat{\beta}$.}
  \label{Fig:PSATwoPoints1}
\end{figure}

\begin{lemm}
Any point (the origin or destination of any request) inserted between $o$ and $d$ should fall in an ellipse determined by $o$ and $d$, otherwise, the QoS will be violated.
\end{lemm}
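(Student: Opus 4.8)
The plan is to argue by contraposition: I will show that if a candidate insertion point $v$ (an origin or destination of some other request) lies outside a suitable ellipse with foci $o$ and $d$, then routing the furthest request $r$ through $v$ forces its detour ratio $\delta$ to exceed $\Delta$, violating the QoS constraint. Since the constraint $\delta \le \Delta$ is equivalent to $T_r \le (1+\Delta)\,D(o,d)$, the whole argument reduces to bounding $T_r$ in terms of the location of $v$, and then recognizing the resulting region as an ellipse.

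First I would establish the key insertion inequality. Even though $r$ has already been picked up in this case, its detour is still measured against the complete $o$-to-$d$ trip, so the relevant quantity is $T_r$. If $v$ is placed anywhere on the route that $p$ traverses from $o$ to $d$, then $T_r \ge D(o,v) + D(v,d)$. This holds because the realized $o$-to-$d$ route splits at $v$ into an $o$-to-$v$ portion of length at least $D(o,v)$ and a $v$-to-$d$ portion of length at least $D(v,d)$, as $D(i,j)$ is by definition the minimum length over all $i$-to-$j$ routes; any intermediate waypoints, including the current position of $p$, only lengthen the trip and are absorbed into the $o$-to-$v$ portion.

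Next I would pass from shortest-path distances to Euclidean distances. Using $D(i,j) \ge E(i,j)$ for every pair of positions and chaining the bounds, whenever QoS holds we get $E(o,v) + E(v,d) \le D(o,v) + D(v,d) \le T_r \le (1+\Delta)\,D(o,d)$. By the sum-of-distances-to-foci characterization of an ellipse, the set of points $v$ satisfying $E(o,v)+E(v,d) \le (1+\Delta)\,D(o,d)$ is exactly the closed region bounded by an ellipse with foci $o,d$ and major-axis sum $(1+\Delta)\,D(o,d)$, which is non-degenerate for $\Delta>0$ since $(1+\Delta)D(o,d) \ge D(o,d) \ge E(o,d)$. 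Reading this contrapositively gives the claim: if $v$ lies outside the ellipse then $E(o,v)+E(v,d) > (1+\Delta)D(o,d)$, hence $T_r \ge D(o,v)+D(v,d) \ge E(o,v)+E(v,d) > (1+\Delta)D(o,d)$, so $\delta>\Delta$ and QoS fails. This simultaneously identifies $\hat{A}_1$ with $\hat{\beta}$ as in Fig.~\ref{Fig:PSATwoPoints1}.

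The main obstacle I anticipate is conceptual rather than computational: replacing $D$ by $E$ makes the constraint strictly looser, so the ellipse is an \emph{outer} approximation of the true shortest-path feasible region rather than a tight description of it. This is precisely what the lemma requires, however, since it only asserts a necessary condition, namely that points outside the ellipse always violate QoS; the one-sided relaxation is therefore harmless and in fact desirable, because it yields a clean geometric filter that never discards a genuinely feasible insertion point. I would make this one-sidedness explicit so the reader sees why the ellipse can be used to prune the search without losing any valid candidate.
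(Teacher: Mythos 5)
Your proposal is correct and follows essentially the same argument as the paper: both bound the realized travel distance of $r$ from below by the shortest-path and then Euclidean distances through the inserted point, combine this with the QoS bound $T_r \le (1+\Delta)\,D(o,d)$, and identify the resulting region via the sum-of-distances-to-foci characterization of the ellipse. Your write-up is merely more explicit than the paper's (the contrapositive phrasing, the splitting of the route at $v$, and the remark that the current position of $p$ only lengthens the trip are all implicit in the paper's chain $(1+\Delta)D(o,d) \ge D(o,o',o'',d) \ge E(o,o',o'',d)$), so no substantive difference remains.
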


\begin{proof}
In Fig.~\ref{Fig:PSATwoPoints1}, a point, e.g., $o'$ (origin of $r'$) will be inserted between $o$ and $d$. Let $D(i, \ldots, j)$ and $E(i, \ldots, j)$ denote the shortest path distance and Euclidean distance from the first to the last location among positions $(i, i + 1, \ldots, j)$ respectively. Obviously, $(1 + \Delta) D(o, d) \geq D(o, o', o'', d) \geq E(o, o', o'', d)$, i.e., if we draw an ellipse centering at $o$ and $d$ with the major axis $(1 + \Delta) D(o, d)$, the insertion points should fall in this ellipse. Otherwise, $\delta$ (the detour ratio of $r$) will exceed the maximum value $\Delta$, and the QoS of $r$ will be violated.
\end{proof}

To simplify the computation, instead of an ellipse (denoted by the dashed blue line in Fig.~\ref{Fig:PSATwoPoints1}), we use the corresponding rectangle (denoted by the solid blue rectangle in Fig.~\ref{Fig:PSATwoPoints1}) to check if the points satisfy the QoS constraints. The enlarged area is limited, which is denoted by the shadowed areas in Fig.~\ref{Fig:PSATwoPoints1}. We know $D(o, d) \geq E(o, d)$. The PSA determined by $(o, d)$ is denoted by $\hat{\beta}$. Let $\beta = |\hat{\beta}|$, which is shown by
\begin{eqnarray}
&& \hspace{-0.2in} \beta = (1 + \Delta) \, D(o, d) \, \sqrt{(1 + \Delta)^2 D^2(o, d) - E^2(o, d)}. \nonumber
\end{eqnarray}
Finally, we get the PSA of $p$ if the furthest request $r$ has been picked up by $\hat{A}_1 = \hat{\beta}$. Let $A_1 = |\hat{A}_1|$, therefore, $A_1 = \beta$.

\vspace{0.06in}
\emph{2) The furthest request has not been picked up}
\vspace{0.06in}

\begin{figure}
  \centering
  \includegraphics[height=0.27\linewidth,width=0.85\linewidth]{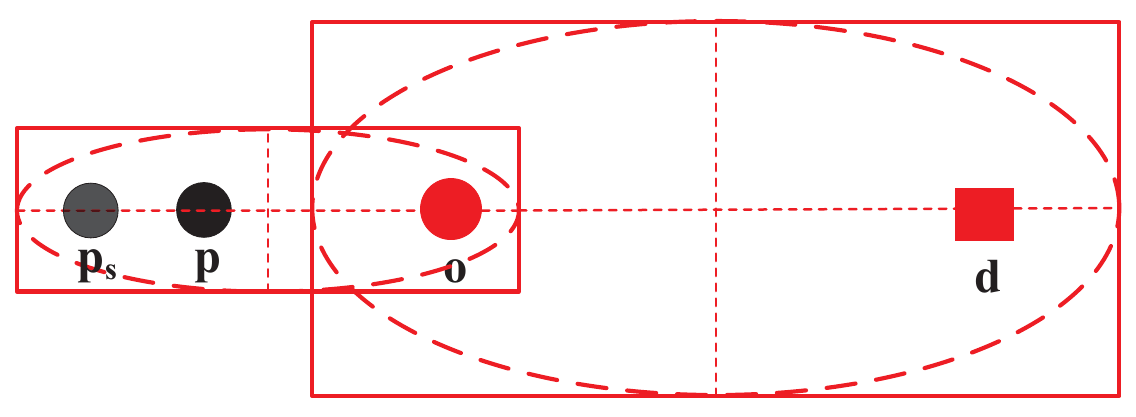}
  \caption{PSA determined by the furthest request $r$ which has not been picked up: $\hat{A}_2 = \hat{\alpha} \bigcup \hat{\beta}$.}
  \label{Fig:PSATwoPoints2}
\end{figure}

Let the schedule time denote the time when schedule of $r$ is confirmed. To limit the waiting time of $r$, we introduce the \emph{buffer distance}, $b$ (km) with a threshold $B$ (km), which denotes the travel distance of $p$ from the schedule time to its pick-up time. Here, we should record the position of PV $p$ at the schedule time of $r$, which is denoted by $p_s$ just as shown in Fig.~\ref{Fig:PSATwoPoints2}. Similarly, we can get the PSA determined by $(p_s, o)$ which is denoted by $\hat{\alpha}$ with $\alpha = |\hat{\alpha}|$, and the PSA determined by $(o, d)$ which is denoted by $\hat{\beta}$ with $\beta = |\hat{\beta}|$. We obtain the PSA of $p$ if the furthest request $r$ has not been picked up denoted by $\hat{A}_2$ with $A_2 = |\hat{A}_2|$ through the union of $\hat{\alpha}$ and $\hat{\beta}$.
\begin{eqnarray}
&& \alpha = W \sqrt{W^2 - E^2(p_s, o)} , \nonumber \\
&& \hat{A}_2 = \hat{\alpha} \bigcup \hat{\beta}. \nonumber
\end{eqnarray}

Next, we discuss the PSA of a PV. We assume that request $r$ with the origin $o$ and the destination $d$ is the furthest request. The request $r'$ with the destination $d'$ is a request which is being served by $p$. We choose the PSA determined by the furthest request (i.e., $r$) not others (e.g., $r'$) as the PSA of PV $p$. The reason is that, if we choose $r'$ to calculate PSA of $p$, we only consider the insertion positions between the current position of $p$ and $d'$, such that the insertion positions between $d'$ and $d$ are ignored.

Let $\hat{A}$ denote the PSA of PV $p$ with $A = |\hat{A}|$, which has two cases and is determined by if the furthest request $r$ has been picked up:
\begin{eqnarray}\label{Eqn:PSA_PV1}
  \hat{A} =
  \begin {aligned}
  \begin{cases}
  \hat{\beta},  \hspace{0.42in} \text{if $r$ has been picked up,} \\
  \hat{\alpha} \bigcup \hat{\beta}, \hspace{0.17in} \text{otherwise,}
  \end{cases}
  \end {aligned}
  \vspace{-0.1in}
\end{eqnarray}
and
\begin{eqnarray}\label{Eqn:PSA_PV2}
  A =
  \begin {aligned}
  \begin{cases}
  \beta,  \hspace{0.42in} \text{if $r$ has been picked up,} \\
  |\hat{\alpha} \bigcup \hat{\beta}|, \hspace{0.10in} \text{otherwise.}
  \end{cases}
  \end {aligned}
  \vspace{-0.1in}
\end{eqnarray}

\subsection{A Key Routine}

Calculating the insertion cost of an origin-destination pair is a key routine in the proposed algorithm. For a new request $r \in R_u$, its origin $o$ has to be visited before its destination $d$. Therefore, if $p$ decides to take request $r$, both $o$ and $d$ need to be inserted into its current path with the precedence constraint being satisfied.

\begin{defi} \label{definition:InsertionCost}
Assume that $r$ is taken by $p$. The insertion cost $\phi_{r, p, i, j}$ at $(i, j)$ is the additional travel distance of $p$ if inserting an origin-destination pair $(o, d)$ of $r$ at the $i^{\text{th}}$ and $j^{\text{th}}$ positions respectively on the path of $p$ with $o$ precedes $d$.
\end{defi}

Let $Q_p = \{\theta_0, \theta_1, \ldots, \theta_K\}$ denote the current path of $p$, where $\theta_0$ is the position of this PV. Let $D(\theta_i, \ldots, \theta_j) = D(\theta_i, \theta_{i+1}) + \ldots, + D(\theta_{j-1}, \theta_j)$ ($i < j$) denote the sum of shortest path distance from $\theta_i$ to $\theta_{i+1}$, $\ldots$ and to $\theta_j$. (\ref{Eqn:Case1})$\sim$(\ref{Eqn:Case4}) describe four cases. The \emph{first} case given by (\ref{Eqn:Case1}) means that $d$ is not the last point of the path, and $o$ immediately precedes $d$. The \emph{second} case given by (\ref{Eqn:Case2}) means that $d$ is the last point, and $o$ immediately precedes $d$. The \emph{third} case given by (\ref{Eqn:Case3}) means that $d$ is the last point, and $o$ does not immediately precede $d$. The \emph{fourth} case given by (\ref{Eqn:Case4}) means that $d$ is not the last point, and $o$ does not immediately precede $d$. Let a function $Q_p$ = \emph{INSERT}$(r, p, i, j)$ returns $Q_p$ (the path of $p$) by inserting the origin $o$ and the destination $d$ of $r$ at the $i^{\text{th}}$ and $j^{\text{th}}$ positions respectively on the path of $p$.

\begin{eqnarray}
\hspace{-0.55in} \phi_{r, p, i, j} &=& D(\theta_{i}, o, d, \theta_{i+1}) - D(\theta_{i}, \theta_{i+1}), \nonumber \\
&&  \hspace{0.1in}  \text{if}~ 0 \leq i\leq K - 1, j = i + 1, \label{Eqn:Case1}\\
 \hspace{-0.54in} \phi_{r, p, i, j} &=& D(\theta_{K}, o, d), \nonumber \\
&& \hspace{0.1in} \text{if}~ i = K, j = i + 1, \label{Eqn:Case2}\\
\hspace{-0.545in} \phi_{r, p, i, j} &=& D(\theta_{i}, o, \theta_{i+1}) + D(\theta_{K}, d) - D(\theta_{i}, \theta_{i+1}), \nonumber \\
&&  \hspace{0.1in}  \text{if}~ 0 \leq i \leq K - 1, j = K + 1, \label{Eqn:Case3}\\
\hspace{-0.54in} \phi_{r, p, i, j} &=& D(\theta_{i}, o, \theta_{i+1}) + D(\theta_{j}, d, \theta_{j+1}) \nonumber \\
&&\hspace{0.01in} - D(\theta_{i}, \theta_{i+1}) - D(\theta_{j}, \theta_{j+1}), \nonumber \\
&&  \hspace{0.1in}  \text{if}~ 0 \leq i \leq K - 1, j \leq K, j \neq i + 1. \label{Eqn:Case4}
\end{eqnarray}

\subsection{Algorithm} \label{Subsec:Algorithm}


The basic idea of the proposed solution is that, we draw a search area for each PV in three cases according to the insertion positions of origin-destination pairs of requests which ensures that serving the requests out of this search area leads to the violation of QoS, so that we only search requests in this limited search area instead of the whole city.

\begin{figure}
  \centering
  \includegraphics[height=0.62\linewidth,width=0.89\linewidth]{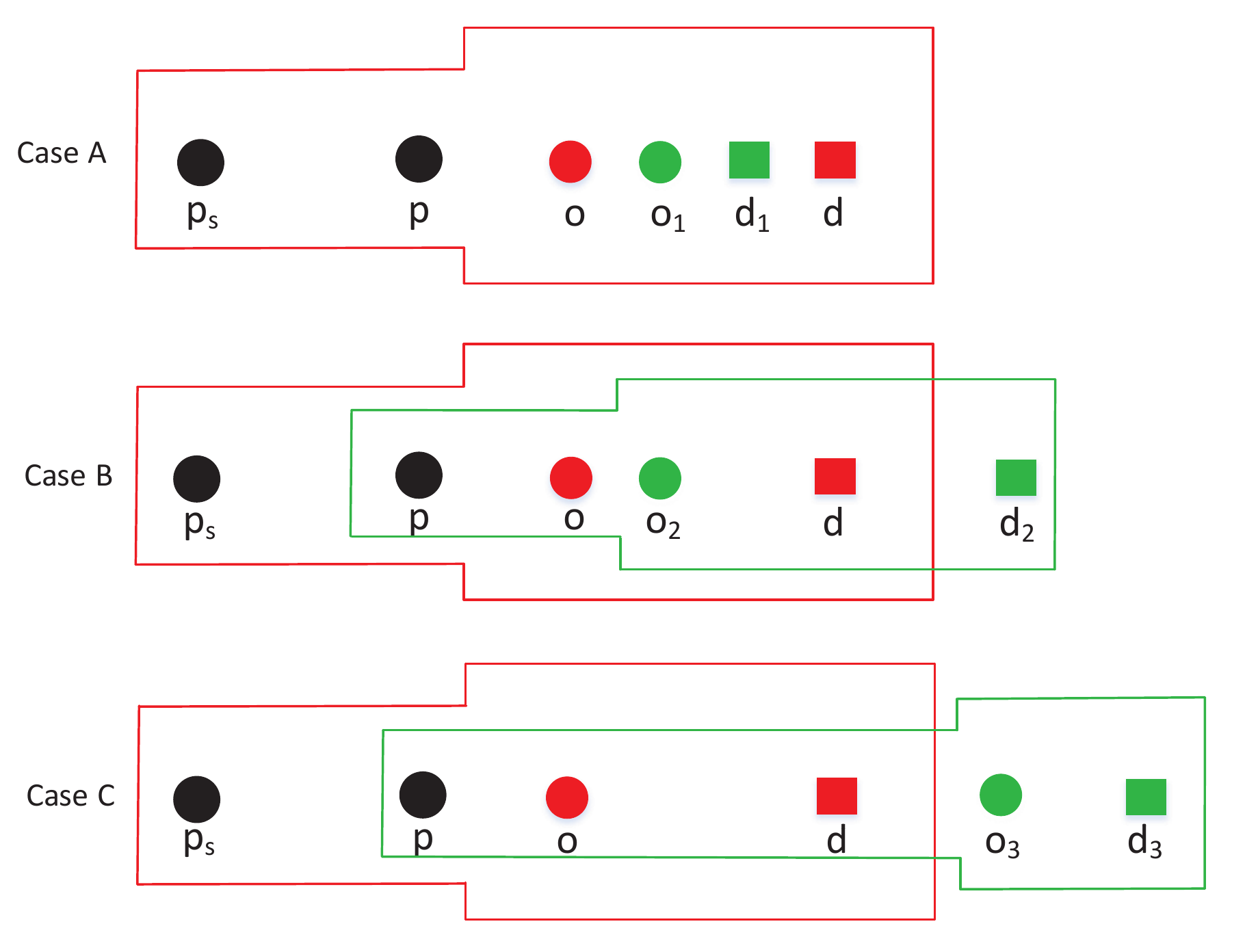}
  \caption{Corresponding PSA in three insertion cases.}
  \label{Fig:CorrespondingPSAThreeInsertionCases}
\end{figure}

Here, we discuss three insertion cases using PSA on the paths of PVs according to the insertion positions of requests. In order to insert the origin-destination pair of a new request $r$, select a position $\theta_i ~ (0 \leq i \leq K)$ of path of $p$ to insert $o$ after $\theta_i$. Then from the positions after $o$ select another position $\theta_j ~ (i + 1 \leq j \leq K + 1 )$ to insert $d$. As shown in Fig.~\ref{Fig:CorrespondingPSAThreeInsertionCases}, there are three cases according to the final insertion positions of the origin and the destination:\\
Case $\mathbb{A}$: None of the origin ($o_1$) and the destination ($d_1$) becomes the last point of the new path. \\
Case $\mathbb{B}$: The destination ($d_2$) becomes the last point of the new path, while the origin ($o_2$) does not precede $d_2$ immediately. \\
Case $\mathbb{C}$: The destination ($d_3$) becomes the last point of the new path, and the origin ($o_3$) precedes the $d_3$ immediately.

In Fig.~\ref{Fig:CorrespondingPSAThreeInsertionCases}, we can see the corresponding PSA in three insertion cases. $d$ is the last point on the path before new requests are inserted. $(o_i, d_i)$ is the origin-destination pair of request $r_i$ ($i \in \{ 1, 2, 3 \}$). The three requests $r_1$, $r_2$, and $r_3$ belong to cases $\mathbb{A}$, $\mathbb{B}$, and $\mathbb{C}$ respectively. The red lines denote the previous PSA of this PV, and the green lines in cases $\mathbb{B}$ and $\mathbb{C}$ denote the new PSA of this PV. Fig.~\ref{Fig:CorrespondingPSAThreeInsertionCases} implies that the PSA in cases $\mathbb{B}$ and $\mathbb{C}$ has changed since the destination of the newly inserted request becomes the last point on the path. Finally, we discuss case $\mathbb{C}$: all points of the previous path of $p$ fall in the PSA $\hat{\alpha}$, which is determined by the current position of $p$ and $o$ since the PSA determined by $o$ and $d$ can be ignored.

If we only consider restricting an area for each PV, there may exist some ``dead zones" such that some unscheduled requests do not satisfy QoS constraints of any PSA of PVs for a long time. To avoid this condition, we introduce a waiting time threshold $W$ for each request. Let $w$ denote the waiting time for request $r$. If $w$ is larger than a threshold $W$, we directly calculate the insertion cost, ignoring the check about the buffer distance and PSA.

\textbf{Algorithm 1} details the proposed solution named as PSA-based path planning algorithm (PSAP). Here, $\delta'$ and $b'$ are the detour ratio and buffer distance of request $r'$ respectively. At some time $t_0$, the cloud selects the unscheduled requests $\mathcal{R}_u$, whose earliest start time is not later than $t_0$. Then the cloud sorts requests $R_u$ in the descending order of their waiting time. In this algorithm, we assume PSA of all PVs have been calculated using (\ref{Eqn:PSA_PV1}) and (\ref{Eqn:PSA_PV2}), which can be inferred by line 28.

The waiting time of requests is balanced in line 2 such that the waiting time of each request will not be too long. Lines 5$\sim$6 check capacity constraints. Lines 9$\sim$14 mean that, this algorithm first checks if the origin and/or destination of the request are/is in PSA, and then calculates the insertion cost in three cases ($\mathbb{A}$, $\mathbb{B}$, and $\mathbb{C}$). In lines 15 and 19, the QoS constraint is checked. Line 21 calculates the minimum insertion cost for PV $p$, and line 22 calculates the minimum insertion cost for all PVs. Lines 23$\sim$26 imply that, if the cloud can find a suitable PV to serve the request, the path and the service list of this PV should be updated, and this request will be removed from the set of unscheduled requests. Lines 27$\sim$28 means that, if destination of the newly inserted request becomes the last point on the path, the PSA of this PV should also be updated, since the PSA is determined by the furthest request.

\vspace{0.05in}
\begin{tabular}[tbp]{lp{0.45\textwidth}}
  \toprule
  \textbf{Algorithm 1}: PSAP \\
  \toprule
  ~~~~\textbf{Input}: $\mathcal{R}_u$, set of unscheduled requests;\\
  ~~~~~~~~~~~~~$\{Q_p\}_{p \in P}$, current paths of PVs;\\
  ~~~~~~~~~~~~~$\{L_p\}_{p \in P}$, current service lists of PVs;\\
  ~~~~\textbf{Output}: $\mathcal{R}'_u$, new set of unscheduled requests;\\
  ~~~~~~~~~~~~~~~$\{L'_p\}_{p \in P}$, new service lists of PVs;\\
  ~~~~~~~~~~~~~~~$\{Q'_p\}_{p \in P}$, new paths of PVs;\\
  01:~Initialization:$\{Q'_p\}\!\leftarrow\!\{Q_p\}$,$\{L'_p\}\!\leftarrow\!\{L_p\}$,$\mathcal{R}'_u\! \leftarrow\!\mathcal{R}_u$; \\
  02:~Sort the unscheduled requests in the descending  \\
  ~~~~order of their waiting time;\\
  03:~\textbf{for} $r \in \mathcal{R}_u$ \textbf{do}\\
  04:~~~\textbf{for} $p \in \mathcal{P}$ \textbf{do}\\
  05:~~~~~\textbf{if} $|L'_p| + n > C$ \textbf{then}\\
  06:~~~~~~~$\phi_{r,p} \leftarrow \infty $;\\
  07:~~~~~\textbf{else} \\
  08:~~~~~~~\textbf{if}~$w \leq W$\\
  09:~~~~~~~~~\textbf{if}~$o \in \hat{A}$ AND $d \in \hat{A}$\\
  10:~~~~~~~~~~~Calculate $\{\phi_{r, p, i, j}\}_{i, j}$ of case $\mathbb{A}$;\\
  11:~~~~~~~~~\textbf{if}~$o \in \hat{A}$ AND $d \notin \hat{A}$\\
  12:~~~~~~~~~~~Calculate $\{\phi_{r, p, i, j}\}_{i, j}$ of case $\mathbb{B}$;\\
  13:~~~~~~~~~\textbf{if}~All points on path of $p$ fall in $\hat{\alpha}$\\
  14:~~~~~~~~~~~Calculate $\{\phi_{r, p, i, j}\}_{i, j}$ of case $\mathbb{C}$;\\
  15:~~~~~~~~~\textbf{if}~$\exists$ $r' \in \{ r \bigcup L_p\}$, $\delta' > \Delta$ \text{OR} $b' > B$\\
  16:~~~~~~~~~~~$\phi_{r, p, i, j} \leftarrow \infty $;\\
  17:~~~~~~~\textbf{else}\\
  18:~~~~~~~~~Calculate $\{\phi_{r, p, i, j}\}_{i, j}$;\\
  19:~~~~~~~~~\textbf{if} $\exists$ $r' \in \{ r \bigcup L_p\}$, $\delta' > \Delta$\\
  20:~~~~~~~~~~~$\phi_{r, p, i, j} \leftarrow \infty $;\\
  21:~~~~~$\phi_{r, p} \leftarrow \min \{\phi_{r, p, i', j'}\}_{i', j'}$;  \\
  22:~~~$\phi_r \leftarrow \phi_{r, p', i'', j''} = \min \{\phi_{r, p}\}_p$; \\
  23:~~~\textbf{if} $\phi_r \neq \infty $ \textbf{then}\\
  24:~~~~~$Q'_{p'} \leftarrow \text{INSERT}(r, p', i'', j'')$; // Update path \\
  25:~~~~~$L'_{p'} \leftarrow L'_{p'} \bigcup \{r\}$; // Update service list  \\
  26:~~~~~$\mathcal{R}'_u \leftarrow \{ \mathcal{R}'_u \backslash r\}$; // Update the set of unsche-\\
  ~~~~~~~~~duled requests\\
  27:~~~~~\textbf{if} $d$ becomes the last point on path of $p'$\\
  28:~~~~~~~Calculate $A'$; // Update PSA of $p'$\\
  \bottomrule
\end{tabular}
\vspace{0.05in}

\subsection{Example}

\begin{figure}[htbp]
  \centering
  \includegraphics[height=0.29\linewidth,width=0.99\linewidth]{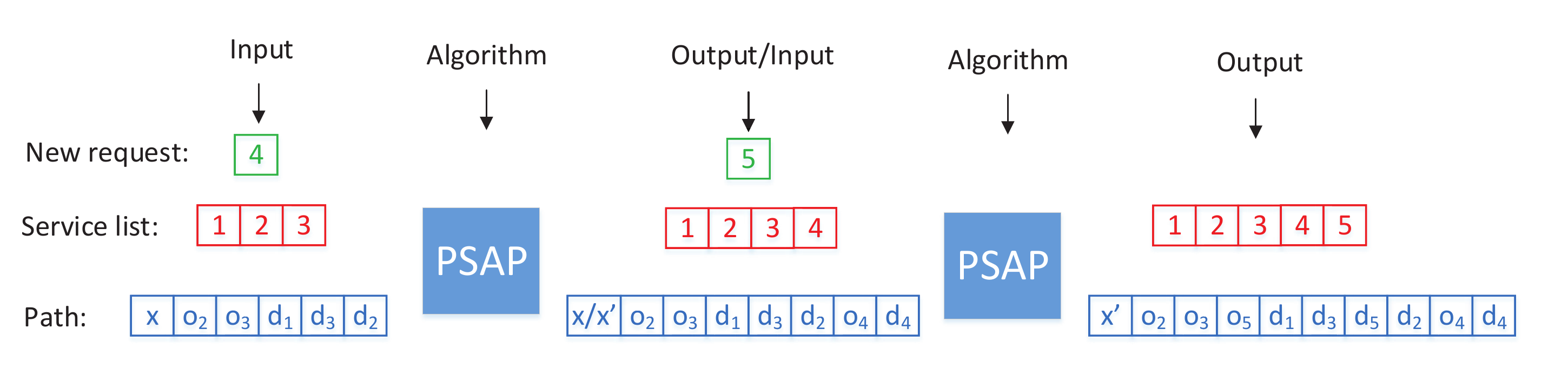}
  \caption{An example of PSAP.}
  \label{Fig:ExampleOfPSAP}
\end{figure}

Fig.~\ref{Fig:ExampleOfPSAP} shows an example to illustrate the PASP algorithm, which has two stages. In the \emph{first} stage, a PV serves three requests $1$, $2$, and $3$, i.e., the service list is $\{ 1, 2, 3 \}$, and the current path is $\{ x \rightarrow o_2 \rightarrow o_3 \rightarrow d_1 \rightarrow d_3 \rightarrow d_2 \}$ where $x$ is the current position of this PV, $o_i$ is the pick-up point of request $i$ and $d_i$ is the drop-off point of request $i$ with $i = \{ 1, 2, 3, 4, 5 \}$. We can see that request $1$ has been picked up and others are still waiting to be served. Now, the cloud receives a new request $4$, while only the case $\mathbb{C}$ satisfies QoS constraints, therefore, the new path becomes $\{ x \rightarrow o_2 \rightarrow o_3 \rightarrow d_1 \rightarrow d_3 \rightarrow d_2 \rightarrow o_4 \rightarrow d_4 \}$ and the new service list becomes $\{ 1, 2, 3, 4 \}$.

In the \emph{second} stage, the position of this PV becomes $x'$, and request $1$ still has not been dropped off, and requests $2$, $3$, and $4$ have not been picked up, and now the cloud receives a new request $5$. PSAP finds that only case $\mathbb{A}$ satisfies QoS constraints, and the path $\{ x' \rightarrow o_2 \rightarrow o_3 \rightarrow o_5 \rightarrow d_1 \rightarrow d_3 \rightarrow d_5 \rightarrow d_2 \rightarrow o_4 \rightarrow d_4 \}$ achieves the minimum insertion cost, which becomes the new path and $\{ 1, 2, 3, 4, 5 \}$ becomes the new service list.

\section{Performance Analysis} \label{Sec:Analysis}

In this section, we first compare our solution with the exhaustive search, and then present properties about the gap between PSA and optimal PSA ($\text{PSA}^{\text{opt}}$) on the performance, and finally we analyze the reduction ratio of computational complexity.

\subsection{PSAP vs. Exhaustive Search}

The exhaustive search (ES) method has to try each insertion position in the path of each PV, i.e., ignoring the process of checking requests which insertion case they belong to and directly calculating the insertion cost through (\ref{Eqn:Case1})$\sim$(\ref{Eqn:Case4}). The other process is the same as that of PSAP. We see that ES can be revised from PSAP: lines (9)$\sim$(14) are replaced by ``Calculate $\{\phi_{r, p, i, j}\}_{p, i, j}$ through (\ref{Eqn:Case1})$\sim$(\ref{Eqn:Case4})" by trying each insertion position over all PVs, such that a large amount of computation is needed in ES. We see that PSAP and ES have the same service quality performance while different computational complexity. PSAP can greatly improve the computational efficiency by restricting the search areas compared with ES since PSAP only tries a part of requests and others are excluded.

Here, we present more details in PSAP to show its advantages. As shown in Fig.~\ref{Fig:PSATwoPoints1}, some requests which violate QoS are excluded, since the detour of at least a request is larger than $\Delta$ if they are not inserted in PSA. The scenario in Fig.~\ref{Fig:PSATwoPoints2} is similar. The travel distance of the PV from schedule time to pick-up time is limited, and the detour is limited. Therefore, the QoS of requests is preserved. However, all the above scenarios which violating QoS will be calculated in ES, leading to a large amount of computation.

\subsection{Gap between PSA and $\text{PSA}^{\text{opt}}$}

\begin{defi} \label{definition:OptimalPSA-P}
If the furthest request $r$ has been picked up, the optimal PSA ($\text{PSA}^{\text{opt}}$) of PV $p$ is the ellipse centering at $(o, d)$, otherwise, is the union of two ellipses centering at $(p_s, o)$ and $(o, d)$ respectively, just as discussed in Section \ref{Subsec:Heuristics}.
\end{defi}

In practice, we use PSA instead of $\text{PSA}^{\text{opt}}$ since the enlarged area is limited. We take two examples to discuss $\text{PSA}^{\text{opt}}$. In Fig.~\ref{Fig:PSATwoPoints1}, $\text{PSA}^{\text{opt}}$ is the ellipse and the PSA is the corresponding rectangle. In Fig.~\ref{Fig:PSATwoPoints2}, $\text{PSA}^{\text{opt}}$ is the union of two ellipses and the PSA is the union of two corresponding rectangles. We see that the PSA is always a litter larger than $\text{PSA}^{\text{opt}}$.

Here, we discuss the gap between PSA and $\text{PSA}^{\text{opt}}$ to show how much computational complexity is enlarged using PSA than the latter one. Let $\hat{A}$ denotes PSA of $p$ with $A = |\hat{A}|$, and $\hat{A}^{\text{opt}}$ denotes $\text{PSA}^{\text{opt}}$ of $p$ with $A^{\text{opt}} = |\hat{A}^{\text{opt}}|$, and let $\eta$ denote the ratio of PSA and $\text{PSA}^{\text{opt}}$, which is formulated as (\ref{Eqn:eta}). Obviously, $\eta > 1$. In practice, we hope the gap between PSA and $\text{PSA}^{\text{opt}}$ is smaller since this means more computational complexity is saved, i.e., the smaller $\eta$ is, the more efficient the PSAP will be.
\begin{equation}\label{Eqn:eta}
 \eta = \frac{A}{A^{\text{opt}}}.
\end{equation}

\begin{theo}\label{theorem:RatioPSA_OD}
With respect to any PV $p$, if the furthest request $r$ has been picked up, the ratio of PSA and $\text{PSA}^{\text{opt}}$ $\eta = \frac{A}{A^{\text{opt}}} = \frac{4}{\pi}$.
\end{theo}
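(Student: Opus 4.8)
The plan is to exploit the fact that, once the furthest request $r$ has already been picked up, both the PSA and the $\text{PSA}^{\text{opt}}$ are determined solely by the pair $(o,d)$. Indeed, by (\ref{Eqn:PSA_PV1}) we have $\hat{A} = \hat{\beta}$, so $A = \beta$; and by Definition~\ref{definition:OptimalPSA-P} the optimal region is the single ellipse of Fig.~\ref{Fig:PSATwoPoints1} with foci $o$ and $d$ and major-axis length $(1+\Delta)D(o,d)$. Thus the theorem reduces to computing the ratio between the axis-aligned bounding rectangle of this ellipse (the shape actually used by PSAP) and the ellipse itself, and showing that this ratio is independent of $\Delta$, $D(o,d)$ and $E(o,d)$.

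First I would read off the semi-axes of the ellipse from the Lemma. Since the sum of the distances from any boundary point to the two foci equals the major-axis length $(1+\Delta)D(o,d)$, the semi-major axis is $a = \tfrac{1}{2}(1+\Delta)D(o,d)$. The foci $o$ and $d$ are separated by the Euclidean distance $E(o,d)$, so the linear eccentricity is $c = \tfrac{1}{2}E(o,d)$, and the semi-minor axis is
\[
b_{\mathrm{axis}} = \sqrt{a^2 - c^2} = \tfrac{1}{2}\sqrt{(1+\Delta)^2 D^2(o,d) - E^2(o,d)}.
\]
The optimal area is therefore $A^{\text{opt}} = \pi\, a\, b_{\mathrm{axis}}$.

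Next I would verify that the rectangle area matches the earlier expression for $\beta$. The rectangle of Fig.~\ref{Fig:PSATwoPoints1} has its sides aligned with the axes of the ellipse, hence width $2a$ and height $2b_{\mathrm{axis}}$, giving area $4\,a\,b_{\mathrm{axis}}$; substituting the values of $a$ and $b_{\mathrm{axis}}$ recovers exactly $\beta = (1+\Delta)D(o,d)\sqrt{(1+\Delta)^2 D^2(o,d) - E^2(o,d)}$, confirming $A = 4\,a\,b_{\mathrm{axis}}$. Dividing the two areas then yields
\[
\eta = \frac{A}{A^{\text{opt}}} = \frac{4\, a\, b_{\mathrm{axis}}}{\pi\, a\, b_{\mathrm{axis}}} = \frac{4}{\pi},
\]
which is the claim.

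The computation is elementary, so there is no genuine analytic obstacle; the only point deserving care is the geometric setup, namely that $o$ and $d$ serve as the two \emph{foci} (not the center) of the ellipse, and that the rectangle is precisely its axis-aligned bounding box. Once these are fixed, the common factor $a\,b_{\mathrm{axis}}$ cancels and all dependence on $\Delta$, $D(o,d)$ and $E(o,d)$ disappears, leaving the dimension-free constant $4/\pi$.
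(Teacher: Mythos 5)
Your proposal is correct and follows exactly the route the paper intends: the paper's justification is the one-line remark that the theorem ``can be obtained according to the area of the ellipse and corresponding rectangle,'' and your computation of the semi-axes $a = \tfrac{1}{2}(1+\Delta)D(o,d)$, $b_{\mathrm{axis}} = \tfrac{1}{2}\sqrt{(1+\Delta)^2 D^2(o,d) - E^2(o,d)}$ (with $o$, $d$ as foci), giving $A = 4ab_{\mathrm{axis}} = \beta$ and $A^{\text{opt}} = \pi a b_{\mathrm{axis}}$, is precisely the omitted detail. Nothing further is needed.
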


Theorem \ref{theorem:RatioPSA_OD} can be obtained according to the area of the ellipse and corresponding rectangle. Next, let's discuss the gap between PSA and $\text{PSA}^{\text{opt}}$ of any PV $p$. Let $r$ denote the furthest request on path of $p$.

From Fig.~\ref{Fig:PSATwoPoints2}, we know that, if the furthest request has not been picked up, the PSA (or $\text{PSA}^{\text{opt}}$) of the PV is obtained by union of two PSA (or $\text{PSA}^{\text{opt}}$) determined by the request. Now let $\hat{\alpha}^{\text{opt}}$ denote $\text{PSA}^{\text{opt}}$ determined by $(p_s, o)$ with $\alpha^{\text{opt}} = |\hat{\alpha}^{\text{opt}}|$, and $\beta^{\text{opt}}$ denote $\text{PSA}^{\text{opt}}$ determined by $(o, d)$ with $\beta^{\text{opt}} = |\hat{\beta}^{\text{opt}}|$, and $\hat{A}^{\text{opt}}$ denote $\text{PSA}^{\text{opt}}$ of PV $p$ with $A^{\text{opt}} = |\hat{A^{\text{opt}}}|$. Let $\hat{A}^{\text{opt}} = \hat{\alpha}^{\text{opt}} \bigcup \hat{\beta}^{\text{opt}}$. Let $\hat{\mu} = \hat{\alpha} \bigcap \hat{\beta}$ with $\mu = |\hat{\mu}|$ and $\hat{\nu} = \hat{\alpha}^{\text{opt}} \bigcap \hat{\beta}^{\text{opt}}$ with $\nu = |\hat{\nu}|$. Clearly, $\hat{\mu} \supseteq \hat{\nu}$, and $\mu \geq \nu$.

\begin{theo}\label{theorem:RatioPSA_pod_PV}
With respect to any PV $p$, no matter if the furthest request $r$ has been picked up or not, $\eta = \frac{A}{A^{\text{opt}}} \in \left[ \max \big( 1, \frac{4}{\pi} + \frac{4 \nu - \pi \mu}{\pi (\alpha^{\text{opt}} + \beta^{\text{opt}})} \big), \frac{4}{\pi} + \frac{(4 - \pi) \mu}{\pi (\alpha^{\text{opt}} + \beta^{\text{opt}} - \nu)} \right]$.
\end{theo}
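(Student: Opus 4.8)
The plan is to reduce everything to one exact identity for $\eta$ and then read off the two endpoints from it. First I would record the single geometric fact behind Theorem~\ref{theorem:RatioPSA_OD}: each PSA rectangle circumscribes its $\text{PSA}^{\text{opt}}$ ellipse with the same semi-axes, so the area ratio is exactly $4/\pi$; applied to both constituent regions this gives $\alpha = \frac{4}{\pi}\alpha^{\text{opt}}$ and $\beta = \frac{4}{\pi}\beta^{\text{opt}}$. The picked-up case is then the degenerate endpoint of the claimed interval: there is no $\hat{\alpha}$ component, so effectively $\mu = \nu = 0$ and $\eta = \beta/\beta^{\text{opt}} = 4/\pi$, which coincides with both the lower and the upper expression. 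Hence it suffices to treat the not-picked-up case, where $\hat{A} = \hat{\alpha}\bigcup\hat{\beta}$ and $\hat{A}^{\text{opt}} = \hat{\alpha}^{\text{opt}}\bigcup\hat{\beta}^{\text{opt}}$.

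For that case I would apply inclusion--exclusion, $A = \alpha+\beta-\mu$ and $A^{\text{opt}} = \alpha^{\text{opt}}+\beta^{\text{opt}}-\nu$, substitute the two $4/\pi$ relations into the numerator, split it as $\frac{4}{\pi}\left(\alpha^{\text{opt}}+\beta^{\text{opt}}-\nu\right) + \left(\frac{4}{\pi}\nu - \mu\right)$, and divide through to obtain the exact identity
\[
\eta = \frac{\frac{4}{\pi}\left(\alpha^{\text{opt}}+\beta^{\text{opt}}\right) - \mu}{\left(\alpha^{\text{opt}}+\beta^{\text{opt}}\right) - \nu} = \frac{4}{\pi} + \frac{4\nu - \pi\mu}{\pi\left(\alpha^{\text{opt}}+\beta^{\text{opt}} - \nu\right)} .
\]
Both endpoints are meant to fall out of this one expression, so the whole proof hinges on bounding the single correction term $\frac{4\nu-\pi\mu}{\pi(\alpha^{\text{opt}}+\beta^{\text{opt}}-\nu)}$.

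The upper bound is the easy direction. Because $\hat{\nu} = \hat{\alpha}^{\text{opt}}\bigcap\hat{\beta}^{\text{opt}} \subseteq \hat{\alpha}\bigcap\hat{\beta} = \hat{\mu}$ we have $\nu \leq \mu$, so the numerator satisfies $4\nu-\pi\mu \leq (4-\pi)\mu$; leaving the positive denominator untouched turns the identity into $\eta \leq \frac{4}{\pi} + \frac{(4-\pi)\mu}{\pi(\alpha^{\text{opt}}+\beta^{\text{opt}}-\nu)}$, which is exactly the right endpoint. For the lower bound I would combine two sub-bounds: the trivial one $\eta \geq 1$, which holds because $\hat{A}\supseteq\hat{A}^{\text{opt}}$ (each rectangle contains its ellipse) and hence $A \geq A^{\text{opt}}$; and the bound obtained by enlarging the denominator from $\alpha^{\text{opt}}+\beta^{\text{opt}}-\nu$ to $\alpha^{\text{opt}}+\beta^{\text{opt}}$, which yields $\eta \geq \frac{4}{\pi}+\frac{4\nu-\pi\mu}{\pi(\alpha^{\text{opt}}+\beta^{\text{opt}})}$. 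Taking the maximum of the two gives the left endpoint.

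The main obstacle is precisely this last denominator replacement, because it is only sign-consistent when the correction term is nonnegative. When $4\nu \geq \pi\mu$ the numerator is positive, enlarging the denominator genuinely decreases the fraction, and the resulting bound already exceeds $4/\pi > 1$, so the maximum selects it cleanly. When $4\nu < \pi\mu$ the numerator is negative and the same replacement \emph{increases} the fraction, so that expression is no longer a valid lower bound and one must instead rely on $\eta\geq 1$; the delicate point I would have to nail down is that in this regime the bracket's first argument dominates, i.e.\ $\max\!\big(1,\,\frac{4}{\pi}+\frac{4\nu-\pi\mu}{\pi(\alpha^{\text{opt}}+\beta^{\text{opt}})}\big)\leq \eta$. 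Establishing this cleanly needs the containment $\hat{\nu}\subseteq\hat{\mu}$ together with a geometric lower estimate on how small the ellipse overlap $\nu$ can be relative to the rectangle overlap $\mu$, and this is where I expect the argument to require the most care.
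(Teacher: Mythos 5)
Your route is the same as the paper's: the $4/\pi$ ratio from Theorem~\ref{theorem:RatioPSA_OD} applied to both $\alpha/\alpha^{\text{opt}}$ and $\beta/\beta^{\text{opt}}$, inclusion--exclusion giving $A=\alpha+\beta-\mu$ and $A^{\text{opt}}=\alpha^{\text{opt}}+\beta^{\text{opt}}-\nu$, the exact identity $\eta=\frac{4}{\pi}+\frac{4\nu-\pi\mu}{\pi(\alpha^{\text{opt}}+\beta^{\text{opt}}-\nu)}$, and the picked-up case folded in as the degenerate instance $\mu=\nu=0$. Your derivation of the right endpoint (from $\hat{\nu}\subseteq\hat{\mu}$, hence $4\nu-\pi\mu\leq(4-\pi)\mu$) is correct and is in fact more explicit than the paper, which asserts both endpoints in a single line with only the remark ``considering that $\eta>1$.''

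The delicate point you flag for the left endpoint is a genuine gap, and you should know it is equally a gap in the paper's own proof: the paper never examines the sign of $4\nu-\pi\mu$. Moreover, it cannot be repaired by the geometric estimate you hope for. Writing $X=\frac{4}{\pi}+\frac{4\nu-\pi\mu}{\pi(\alpha^{\text{opt}}+\beta^{\text{opt}})}$, the identity gives $\eta-X=\frac{(4\nu-\pi\mu)\,\nu}{\pi(\alpha^{\text{opt}}+\beta^{\text{opt}})(\alpha^{\text{opt}}+\beta^{\text{opt}}-\nu)}$, so $X\leq\eta$ holds \emph{if and only if} $\nu=0$ or $4\nu\geq\pi\mu$; since $\eta>1$ forces $\max(1,X)=X$ whenever $X>\eta$, the claimed lower endpoint fails exactly when $\nu>0$ and $4\nu<\pi\mu$. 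That regime is realizable in the PSA geometry: take $E(p_s,o)$ close to its threshold, so that $\hat{\alpha}^{\text{opt}}$ is a very eccentric ellipse whose focus $o$ lies near a vertex, where the ellipse is far thinner (width of order $b^2/a$) than its circumscribing rectangle (width $b$), and let $(o,d)$ be a short trip roughly perpendicular to $p_s o$; then both overlaps concentrate near $o$ and one gets $0<\nu\ll\frac{\pi}{4}\mu$, hence $\eta<X=\max(1,X)$. So what you proved --- the upper endpoint, and the lower endpoint under $4\nu\geq\pi\mu$ --- is all that is true; the remaining regime is not a missing argument but a (small-margin) counterexample to Theorem~\ref{theorem:RatioPSA_pod_PV} as stated. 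The honest fix is either to add the hypothesis $4\nu\geq\pi\mu$, or to replace the denominator $\alpha^{\text{opt}}+\beta^{\text{opt}}$ in the left endpoint by $\alpha^{\text{opt}}+\beta^{\text{opt}}-\nu$, which makes the lower bound the identity itself and hence trivially valid.
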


\begin{proof}

From (\ref{Eqn:PSA_PV1}), (\ref{Eqn:PSA_PV2}), and Theorem \ref{theorem:RatioPSA_OD}, we know that $\eta$ is involved by the state of the furthest request. Now, we discuss the \emph{first} case, i.e., the furthest request $r$ has not been picked up. From Theorem \ref{theorem:RatioPSA_OD}, we know that,
\begin{equation}
\frac{\alpha}{\alpha^{\text{opt}}} = \frac{\beta}{\beta^{\text{opt}}} = \frac{4}{\pi}. \nonumber
\end{equation}
Considering of the definition of $\mu$ and $\nu$, we get the following:
\begin{eqnarray}
&& A = \alpha + \beta - \mu, \nonumber \label{Eqn:Analysis-2}\\
&& A^{\text{opt}} = \alpha^{\text{opt}} + \beta^{\text{opt}} - \nu, \nonumber  \\
&& \eta = \frac{\alpha + \beta - \mu}{\alpha^{\text{opt}} + \beta^{\text{opt}} - \nu}. \nonumber
\end{eqnarray}
And then $\eta$ is reformulated as
\begin{equation}
\eta = \frac{4}{\pi} + \frac{4 \nu - \pi \mu}{\pi (\alpha^{\text{opt}} + \beta^{\text{opt}} - \nu)}. \nonumber
\end{equation}
Considering that $\eta > 1$, we get
\begin{eqnarray}
&& \hspace{-0.3in} \max \big( 1, \frac{4}{\pi} + \frac{4 \nu - \pi \mu}{\pi (\alpha^{\text{opt}} + \beta^{\text{opt}})} \big)\!\leq\!\eta\!\leq\!\frac{4}{\pi} + \frac{(4 - \pi) \mu}{\pi (\alpha^{\text{opt}} + \beta^{\text{opt}} - \nu)}. \nonumber
\end{eqnarray}

Then, we discuss the \emph{second} case, i.e., the furthest request $r$ has been picked up. Obviously, $\mu = \nu  = 0$, and $\eta = \frac{4}{\pi}$. So the above inequality about the scope of $\eta$ is still workable.

Finally, Theorem \ref{theorem:RatioPSA_pod_PV} is proved.
\end{proof}

\subsection{Reduction Ratio of Computational Complexity}

Reduction ratio of computational complexity (RRCC) includes three cases: $\mathbb{A}$, $\mathbb{B}$, and $\mathbb{C}$. Let $M_\mathbb{A}$, $M_\mathbb{B}$, and $M_\mathbb{C}$ denote the number of insertion times of origin-destination pairs in PSAP in cases $\mathbb{A}$, $\mathbb{B}$, and $\mathbb{C}$. Let $N_\mathbb{A}$, $N_\mathbb{B}$, and $N_\mathbb{C}$ denote the number of insertion times of origin-destination pairs in ES in three cases. Let $\psi_{\mathbb{A}}$, $\psi_{\mathbb{B}}$, and $\psi_{\mathbb{C}}$ denote RRCC in three cases. For example, in case $\mathbb{A}$, we record $M_\mathbb{A}$ and $N_\mathbb{A}$, and then $\psi_{\mathbb{A}}$ is obtained by
\begin{equation}
\psi_{\mathbb{A}} = (N_\mathbb{A} - M_\mathbb{A}) \, / \, N_\mathbb{A}. \nonumber
\end{equation}
$\psi_{\mathbb{B}}$ and $\psi_{\mathbb{C}}$ can be obtained in a similar way. In this subsection, we discuss the estimation for RRCC since it directly affects the computational complexity of PSAP: the larger RRCC is, the more efficient PSAP will be.

Computing PSA involves several square operations, and computing if the origin or destination is in PSA only involves linear operations. Both of them have less computational complexity than the shortest path algorithm such as Dijkstra algorithm and calculating the minimum insertion cost \cite{zhu2016PublicVehicle}. Therefore, the cost of computing PSA and computing if the origin or destination is in PSA can be ignored when we analyze RRCC.

Let $S$ denote the area of the whole road network. If the PSA is smaller, more requests will be excluded, and the PSAP solution will be more efficient. For example, now we insert an origin-destination pair $(o', d')$ of request $r'$ based on PSA $\hat{A}$. In case $\mathbb{A}$, both $o'$ and $d'$ are in the PSA $\hat{A}$.

We assume that the event $o' \in \hat{A}$ is independent with another event $d' \in \hat{A}$. In case $\mathbb{A}$, the probability of $o', d' \in \hat{A}$ is $\frac{A^2}{S^2}$. In case $\mathbb{B}$, the probability of $o' \in \hat{A}$ is $\frac{A}{S}$. In case $\mathbb{C}$, make sure that all the points on the path should fall in the PSA determined by the current position of $p$ and $o$.

Generally, the execution time of case $\mathbb{A}$ is more than case $\mathbb{B}$, and the execution time of case $\mathbb{C}$ is the minimum among three cases, which can be inferred from Section \ref{Subsec:Algorithm}. The execution time of case $\mathbb{C}$ is not meaningful since it needs at most one calculation for one request. We focus on mathematical expectation of RRCC in cases $\mathbb{A}$ and $\mathbb{B}$, which are obtained by
\begin{eqnarray}
&& \mathbf{E} (\psi_{\mathbb{A}}) = 1 - \frac{A^2}{S^2}, \label{Eqn:psi} \\
&& \mathbf{E} (\psi_{\mathbb{B}}) = 1 - \frac{A}{S}, \label{Eqn:psi2}
\end{eqnarray}
where $\mathbf{E}$ means the mathematical expectation.

On the current path of $p$ $Q_p = \{\theta_0, \theta_1, \ldots, \theta_K\}$, we insert $o$ and $d$. Let $i$ be the insertion position of $o$, and we get the following:
\begin{eqnarray}
&& N_\mathbb{A} = \sum_{i = 1}^{K - 1} (K - i) = \frac{1}{2} K (K - 1), \label{Eqn:N_A}\\
&& N_\mathbb{B} = K - 1, \label{Eqn:N_B} \\
&& N_\mathbb{C} = 1. \label{Eqn:N_C}
\end{eqnarray}

The total reduced number of insertion times of origin-destination pairs using PSAP compared with ES is denoted by $I$, and we obtain its mathematical expectation by
\begin{eqnarray}
&& \hspace{-0.53in} \mathbf{E} (I) = N_\mathbb{A} \, \mathbf{E} (\psi_{\mathbb{A}}) + N_\mathbb{B} \, \mathbf{E} (\psi_{\mathbb{B}}) + N_\mathbb{C} \, \mathbf{E} (\psi_{\mathbb{C}}) \nonumber \\
&& \hspace{-0.21in} \approx N_\mathbb{A} \, \mathbf{E} (\psi_{\mathbb{A}}) + N_\mathbb{B} \, \mathbf{E} (\psi_{\mathbb{B}})  \nonumber \\
&& \hspace{-0.21in} = \frac{1}{2} K (K - 1) \left( 1 - \frac{A^2}{S^2} \right) + (K - 1) \left( 1 - \frac{A}{S} \right), \label{Eqn:M}
\end{eqnarray}
where case $\mathbb{C}$ can be ignored since $N_\mathbb{C}$ is much smaller than $N_\mathbb{A}$ and $N_\mathbb{B}$.

Obviously, $\frac{A^2}{S^2} \leq \frac{A}{S} \leq 1$ and we get $\mathbf{E} (\psi_{\mathbb{A}}) \geq \mathbf{E} (\psi_{\mathbb{B}})$. From (\ref{Eqn:N_A})$\sim$(\ref{Eqn:N_C}), we know $N_\mathbb{A} \geq N_\mathbb{B} \geq N_\mathbb{C}$. From (\ref{Eqn:M}), we get that, cases $\mathbb{A}$ and $\mathbb{B}$ almost determine RRCC since generally $N_\mathbb{A} \, \mathbf{E} (\psi_{\mathbb{A}}) \geq N_\mathbb{B} \, \mathbf{E} (\psi_{\mathbb{B}}) \geq N_\mathbb{C} \, \mathbf{E} (\psi_{\mathbb{C}})$.

\section{Performance Evaluation} \label{Sec:Performance}

In this section, we first describe the simulation settings, and then present the results to evaluate the performance of our proposed solution.

\subsection{Simulation Settings}

Simulation settings include: PV setting, privately owned electric vehicle (POEV) setting, road map data, taxi data, and parameter setting.

\textbf{PV Setting}: We use electric vehicles, Tesla Model-S \cite{TeslaModelS} to study the transportation patterns of PVs, although they are not self-driving vehicles now. The number of seats of each PV is 5. Assuming that all PVs travel along the shortest path between any two positions (origins or destinations of requests) with the identical speed 30 km/h. The initial positions of PVs are randomly distributed in the road network which follows a uniform distribution. PVs always travel to serve passengers as long as new requests are assigned, and PVs stay at their positions if no requests are assigned to them.

\textbf{POEV Setting}: We also use the Tesla Model-S \cite{TeslaModelS} to explore the traffic characteristics of POEVs. The settings of POEVs are the same as PVs except that POEVs only provide the origin-to-destination ride service along the shortest path instead of ride-sharing services. Clearly, the number of POEVs is the half the number of requests if each POEV is used twice a day, which is much larger than that of PVs.

\begin{figure}
  \centering
  \includegraphics[height=0.25\linewidth,width=0.98\linewidth]{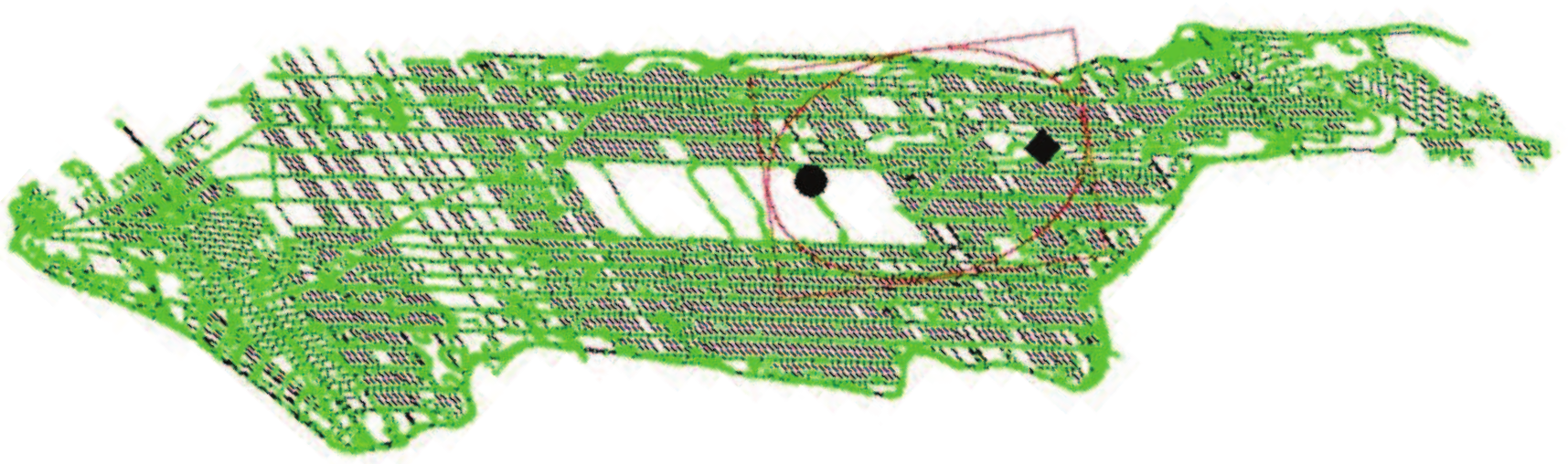}
  \caption{Manhattan in New York City.}
  \label{Fig:Manhattan}
\end{figure}

\textbf{Road Map Data}: We use the Manhattan road network with 60 km$^2$ in New York City to perform PSAP and ES, which is depicted by Fig.~\ref{Fig:Manhattan} where black lines denote road segments, and green points denote nodes. The road map is extracted through the openstreetmap \cite{Openstreetmap} with six types of ways: primary, secondary, tertiary, motorway, motorway\_link, and residential. Finally, 3,900 ways and 29,792 nodes are filtered.

The black circle and black square in Fig.~\ref{Fig:Manhattan} are the origin and the destination of a request respectively. We assume that this request is the furthest and is just picked up by a PV at the origin, and the corresponding PSA and $\text{PSA}^{\text{opt}}$ are denoted by the red rectangle and the red ellipse respectively. We can see that, in case $\mathbb{A}$, the search area denoted by the PSA is much smaller than the whole road network, which implies that the search space is largely reduced and the computational efficiency is improved.

\begin{figure*}[htbp]
  \begin{minipage}[b]{0.33\linewidth}
    \centering
    \includegraphics[height=0.80\linewidth,width=0.99\linewidth]{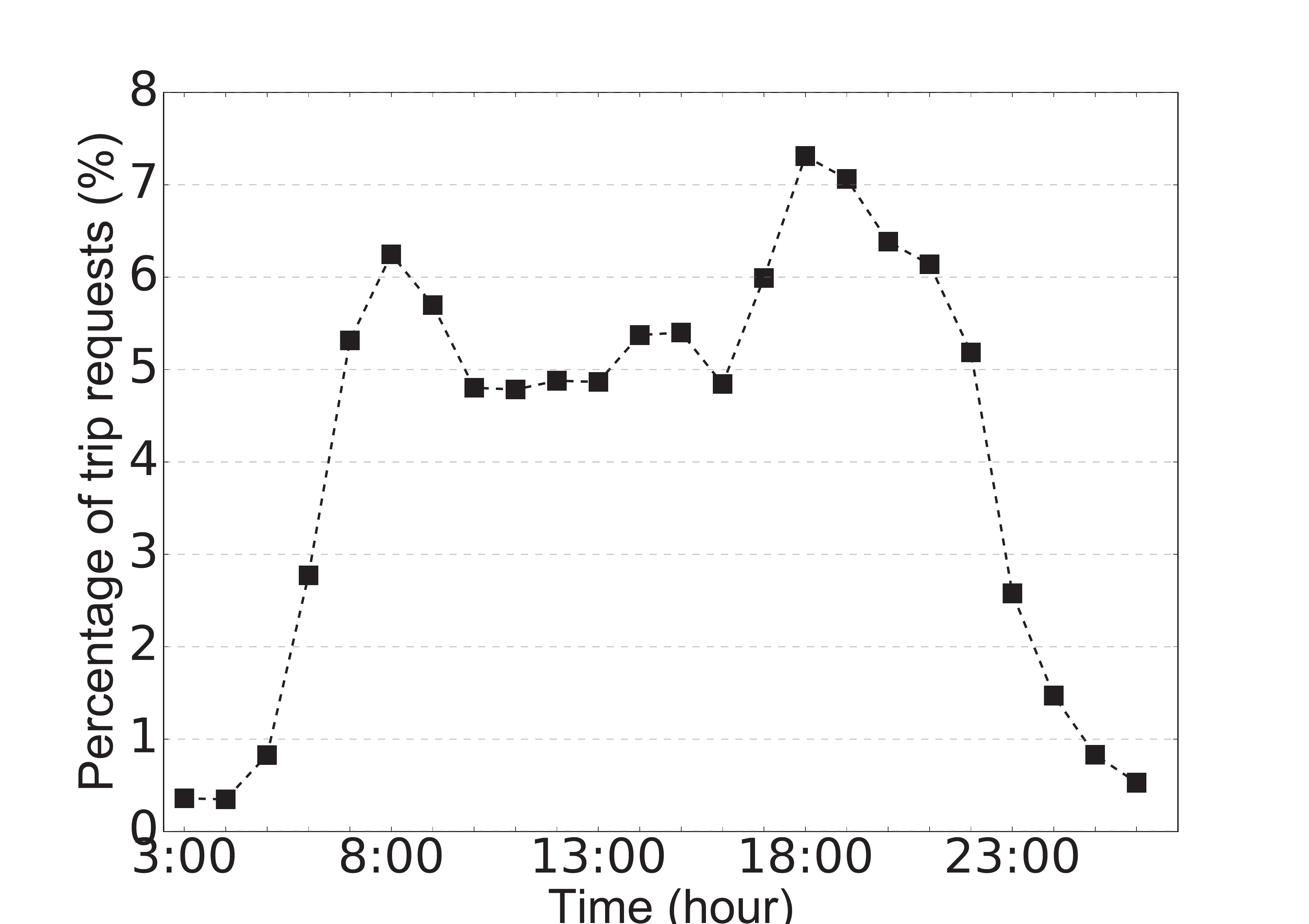}
    \caption{Distribution of requests in one day.}
    \label{Fig:DistributionOfRrequestsOneDay}
  \end{minipage}  
  \begin{minipage}[b]{0.33\linewidth}
    \centering
    \includegraphics[height=0.80\linewidth,width=0.99\linewidth]{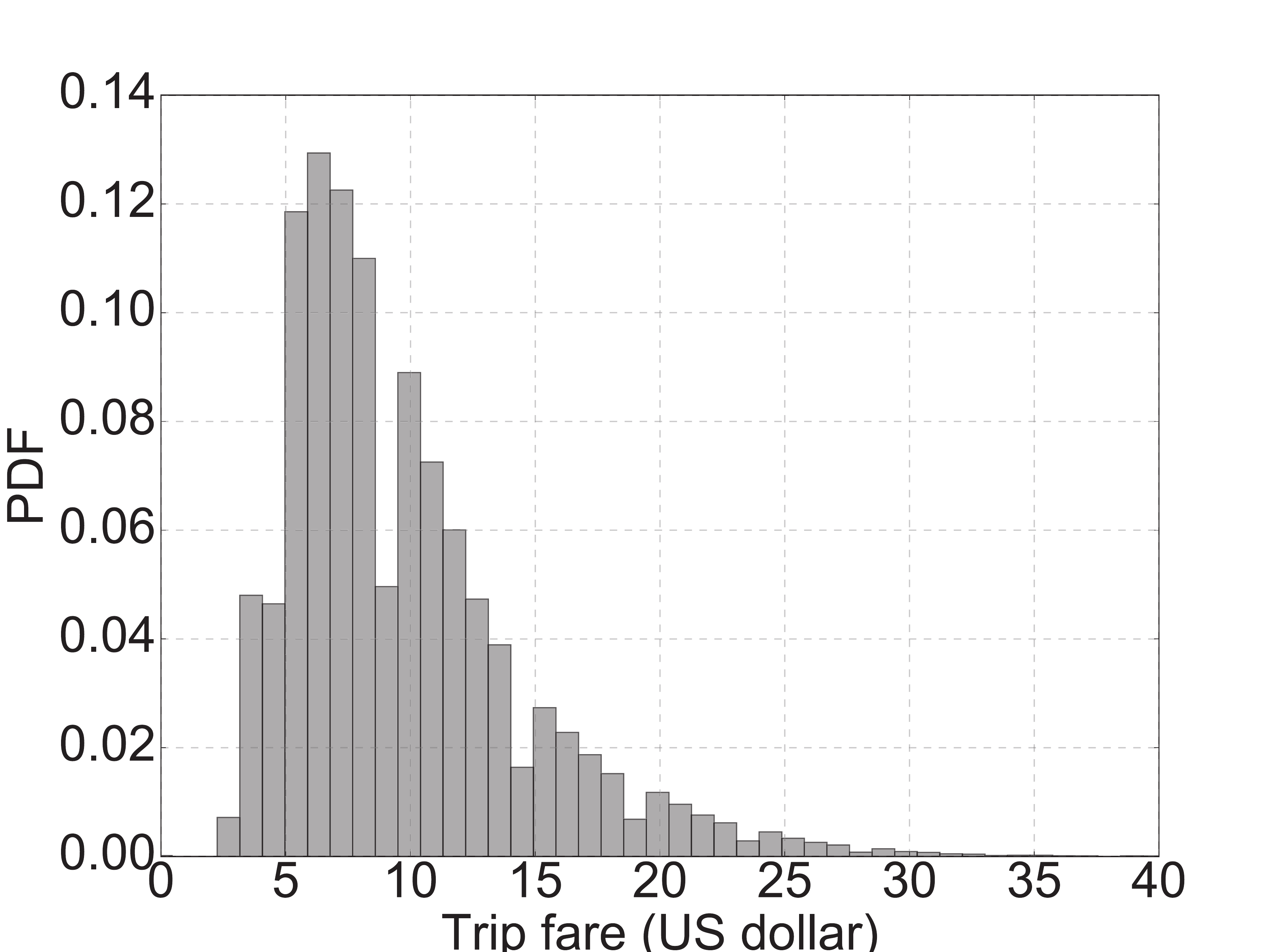}
    \caption{PDF of trip fare.}
    \label{Fig:MahattanTaxiDataFare_PDF}
  \end{minipage}  
  \begin{minipage}[b]{0.33\linewidth}
    \centering
    \includegraphics[height=0.80\linewidth,width=0.99\linewidth]{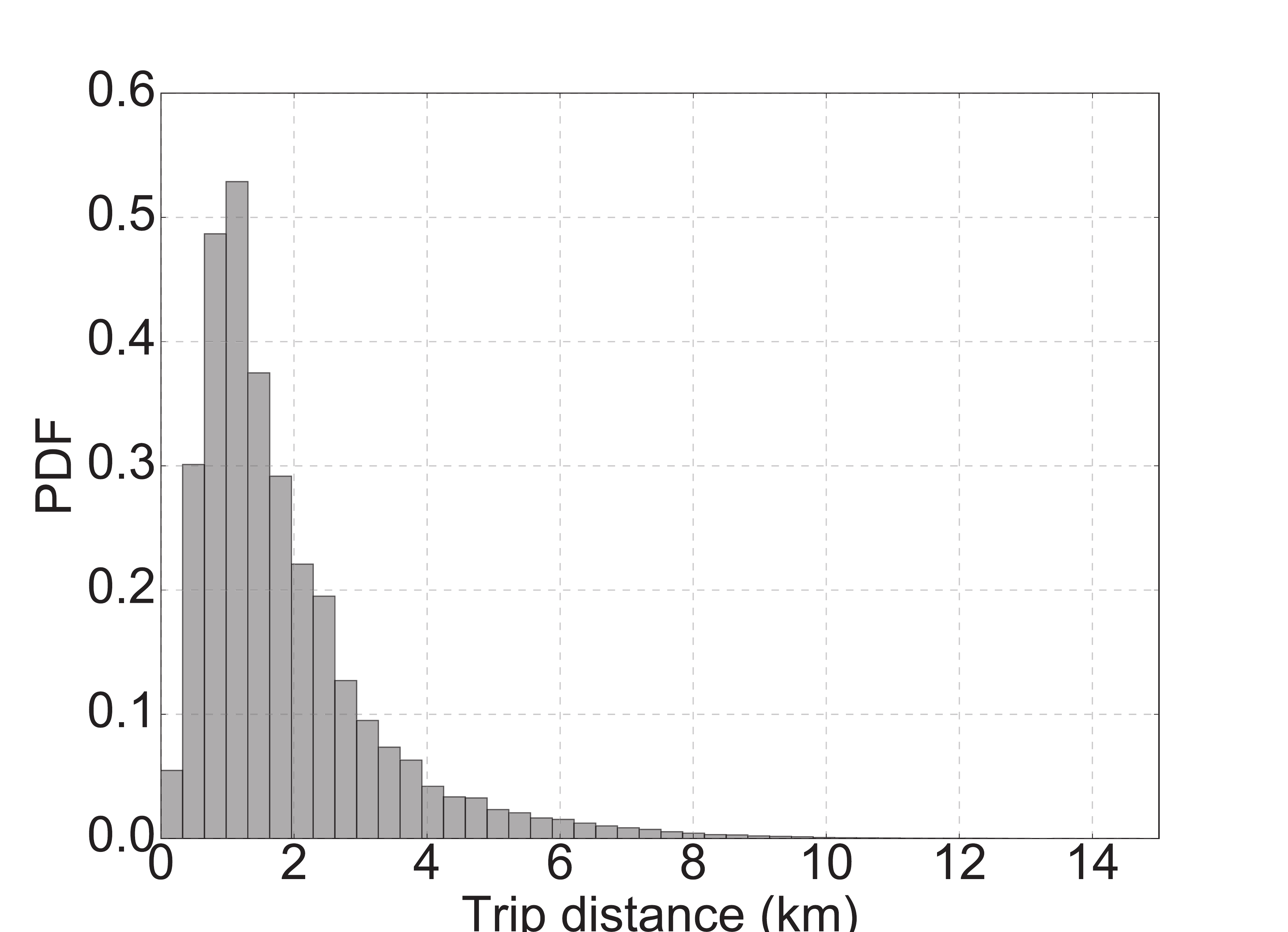}
    \caption{PDF of trip distance.}
    \label{Fig:MahattanTaxiDataDistTrip_PDF}
  \end{minipage}
\end{figure*}

\textbf{Taxi Data}: We use the taxi data set (yellow records) of the New York City on March 1, 2016 \cite{NewYorkCityTaxiData}. Each record contains several useful fields for our study, including pick-up time, drop-off time, trip distance, latitudes/longitudes of origins, latitudes/longitudes of destinations, fares, tax, tip, and total payment. Then the trip records with origin-destination pairs falling in Mahattan are selected.

Let's discuss three traffic characteristics in the Mahattan taxi data set. The \emph{first} one is the distribution of these trip requests (only in Manhattan) in each hour on March 1 (Tuesday), 2016, which is shown by Fig.~\ref{Fig:DistributionOfRrequestsOneDay}. The busiest time occurs at 8:00 and 18:00 respectively in morning and evening rush hours. The \emph{second} one is the PDF of trip fare, which is presented by Fig.~\ref{Fig:MahattanTaxiDataFare_PDF}. We know that most of the trip fare is between 5$\sim$15 US dollars. The \emph{third} one is the PDF of trip distance, which is depicted by Fig.~\ref{Fig:MahattanTaxiDataDistTrip_PDF}. We can see that most of the trip distance is less than 6 km.

We assume the pick-up time of each request is equal to the earliest start time. All PVs travel as the ride-sharing path computed by the cloud, and stop at the destinations of requests for the next schedule task. To make the performance more stable, we select the requests with Euclidean distance no less than 3 km, and then the total number of trip requests of the day in Mahattan is 75,014, and the number is reduced to 3,112 in the period 11:00$\sim$11:59.

\textbf{Parameter Setting}: Passengers prefer low detour ratio, which reflects the QoS of passengers and will be widely accepted if the value is not larger than the maximum value $\Delta = 0.2$. The waiting time threshold $W$ (minute) is 4. If $W$ is too large, some requests which fall in dead zones may have to wait for long time, however, if it is too small, the effect of PSAP is not clear since it directly calculate the insertion cost ignoring the check about PSA. The buffer distance threshold $B$ (km) is set to 6. If $B$ is set too large, the PSA of PVs may be too large just as the whole city and PSAP may spend more time on computing the paths of PVs, while if it is too small, most requests will fall in the limited search area and they will be assigned only after the waiting time is larger than the threshold $W$.

We implement PSAP and ES through a computer with an Intel Core-i7 (3.4 GHz and 32 GB of RAM) using C++ under Windows OS. In the 24-hour simulations, the time begins at 3:00 for two reasons. 1) This time is one of the most important shift handover time in taxi companies in many cities such as Shenzhen \cite{tian2016ChargingStationRecommendation}. 2) The number of trip requests of this time is almost the minimum of one day in multiple cities such as Shanghai \cite{zhu2016PublicVehicle}, Tokyo \cite{atasoy2015FMOD}, and New York City \cite{NewYorkCityTaxiData}.

\subsection{Results}

\begin{figure*}[htbp]
  \begin{minipage}[b]{0.33\linewidth}
    \centering
    \includegraphics[height=0.80\linewidth,width=0.99\linewidth]{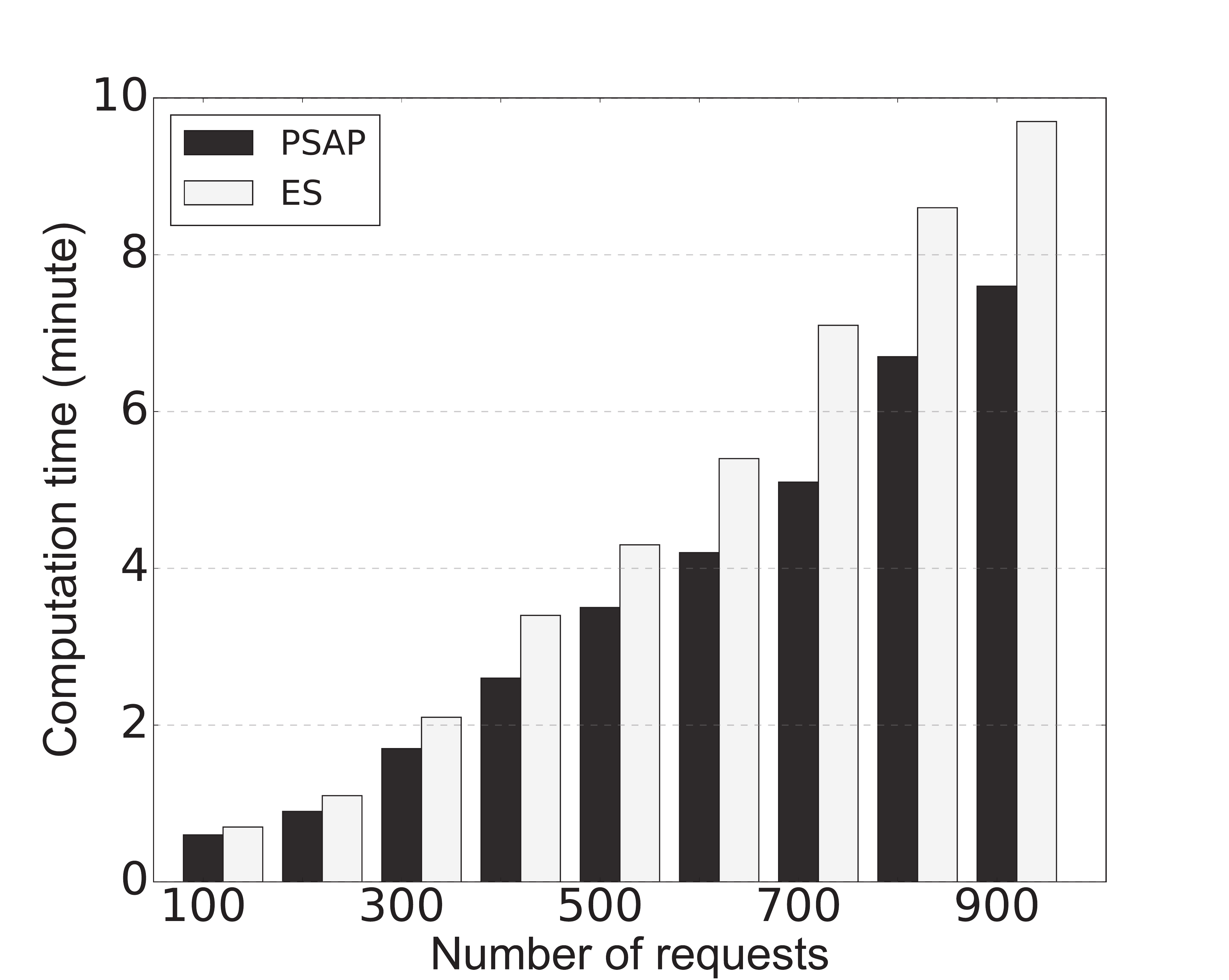}
    \caption{Computation time comparison.}
    \label{Fig:ComputationTimeComparison}
  \end{minipage}
  \begin{minipage}[b]{0.33\linewidth}
    \centering
    \includegraphics[height=0.80\linewidth,width=0.99\linewidth]{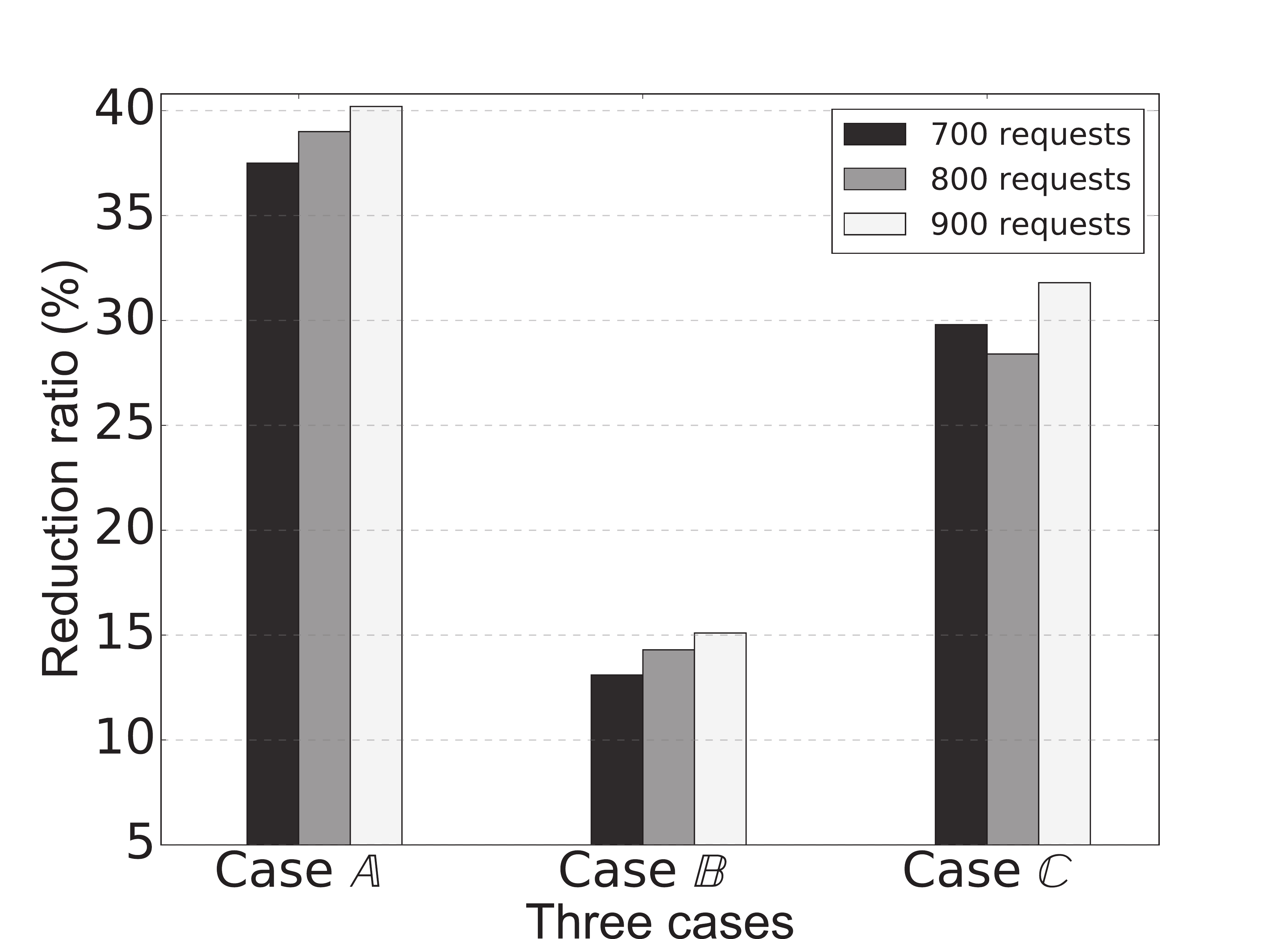}
    \caption{RRCC in three cases.}
    \label{Fig:ReductionRatioComputation}
  \end{minipage}
  \begin{minipage}[b]{0.33\linewidth}
  \centering
  \includegraphics[height=0.80\linewidth,width=0.99\linewidth]{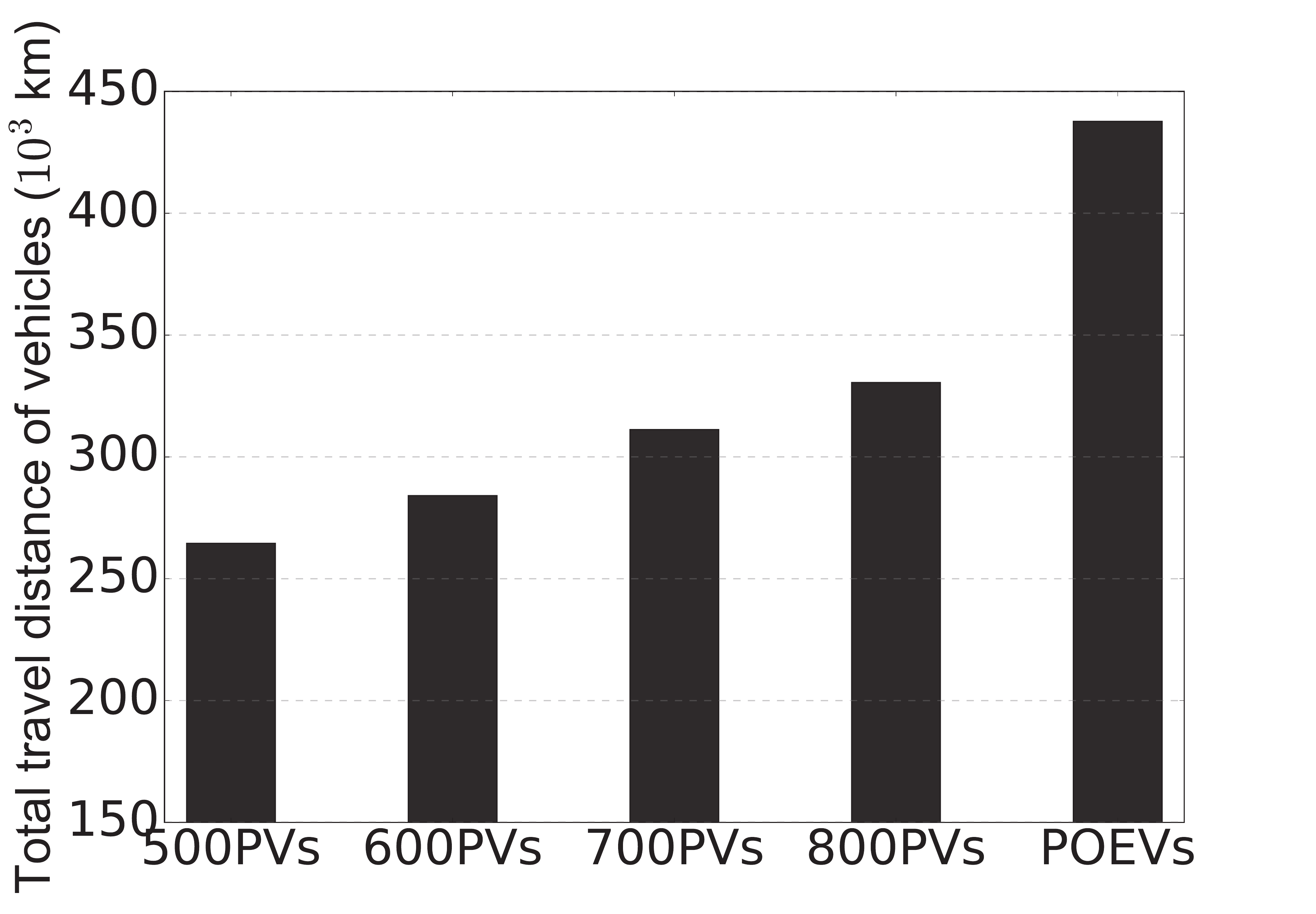}
  \caption{Total travel distance comparison.}
  \label{Fig:TravelDistanceComparison}
  \end{minipage}
\end{figure*}

Here, we introduce two metrics: sharing rate and saved travel distance. 1) The sharing rate should imply the traffic sharing scenario in PV systems, and is denoted by the following
\begin{equation}
\text{sharing rate} = \frac{\sharp ~ \text{of requests being served}}{\sharp ~ \text{of moving vehicles}}.
\end{equation}
We see that the sharing rate of POEVs does not change and is always 1 since there is no sharing among different requests. 2) The saved travel distance is denoted by the sum of the shortest path distance of each origin-destination pair of requests minus the total travel distance of PVs. We see that the saved travel distance implies the amount of saved energy of transportation systems.

We know that PSAP and ES have the same service quality performance, therefore, we compare the computational complexity of PSAP and ES, and then present the traffic performance of PVs using PSAP and POEVs to explore their transportation patterns.

The simulation results are presented in six items: computation time comparison, RRCC, total travel distance comparison, utilization rate, sharing rate, and saved travel distance. And finally, the energy efficiency and particular features of PV systems are discussed.

The \emph{first} metric is the computation time, which is shown by Fig.~\ref{Fig:ComputationTimeComparison} where the number of requests varies from 100 to 900 during 11:00$\sim$11:59 and the number of PVs is 70. We get that, when the number of requests increases, the computing time of PSAP and ES both increases, however, the computing time using PSAP is reduced by 22\% compared with ES. ES tries all the possible insertion positions while PSAP only tries a part of them since the requests violating QoS constraints are excluded. From (\ref{Eqn:psi}) and (\ref{Eqn:psi2}), we know that, if the urban area increases, more computation will be reduced using PSAP, i.e., PSAP will have better performance in traffic big data scenarios.

The \emph{second} metric is RRCC, which is displayed by Fig.~\ref{Fig:ReductionRatioComputation} where the number of PVs is 70, and the number of requests is set to 700, 800, and 900 respectively during 11:00$\sim$11:59. We see that, the RRCC in case $\mathbb{A}$ is much higher than that in case $\mathbb{B}$, and the RRCC in case $\mathbb{C}$ is between the other two cases. The results can be inferred from (\ref{Eqn:psi}) and (\ref{Eqn:psi2}). For example, when the number of requests is 900, the RRCC in three cases $\mathbb{A}$, $\mathbb{B}$ and $\mathbb{C}$ is 40.2\%, 15.1\%, and 31.8\% respectively. The result can be explained as follows. In case $\mathbb{A}$, both the origin and destination of a new request should be in PSA of the PV, while in case $\mathbb{B}$, only the origin should be in PSA. Therefore, the number of requests which satisfy case $\mathbb{A}$ is much smaller than that of case $\mathbb{B}$. In case $\mathbb{C}$, all the points of the current path of one PV should fall in PSA, which is related to the number of requests which are being served and waiting to be served, therefore, it is hard to predict the result in case $\mathbb{C}$ under such particular constraints.

\begin{figure*}[htbp]
  \begin{minipage}[b]{0.33\linewidth}
    \centering
    \includegraphics[height=0.80\linewidth,width=0.99\linewidth]{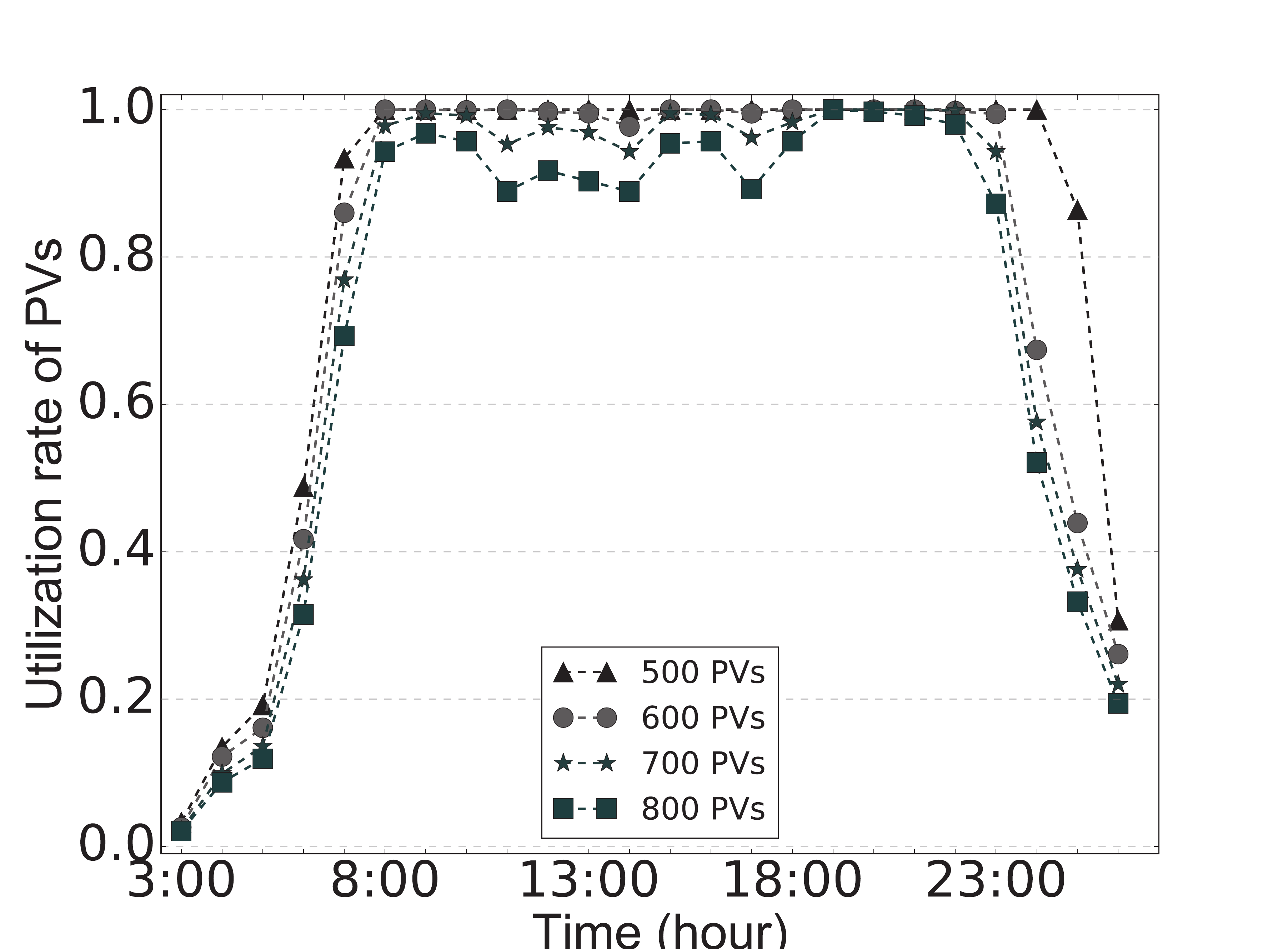}
    \caption{Utilization rate.}
    \label{Fig:UtilizationRate}
  \end{minipage}
  \begin{minipage}[b]{0.33\linewidth}
  \centering
  \includegraphics[height=0.80\linewidth,width=0.99\linewidth]{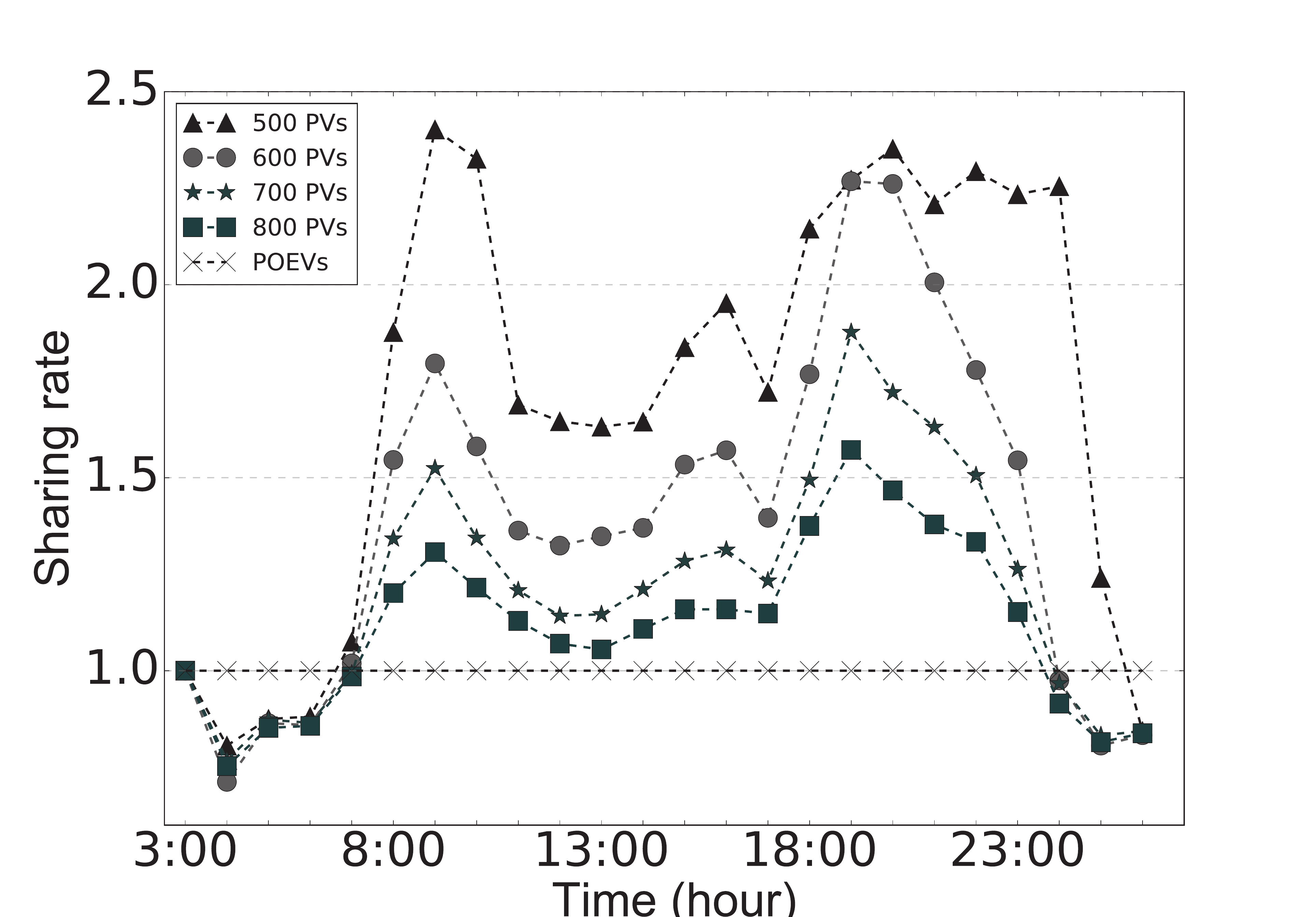}
  \caption{Sharing rate.}
  \label{Fig:SharingRate}
  \end{minipage}
  \begin{minipage}[b]{0.33\linewidth}
    \centering
    \includegraphics[height=0.80\linewidth,width=0.99\linewidth]{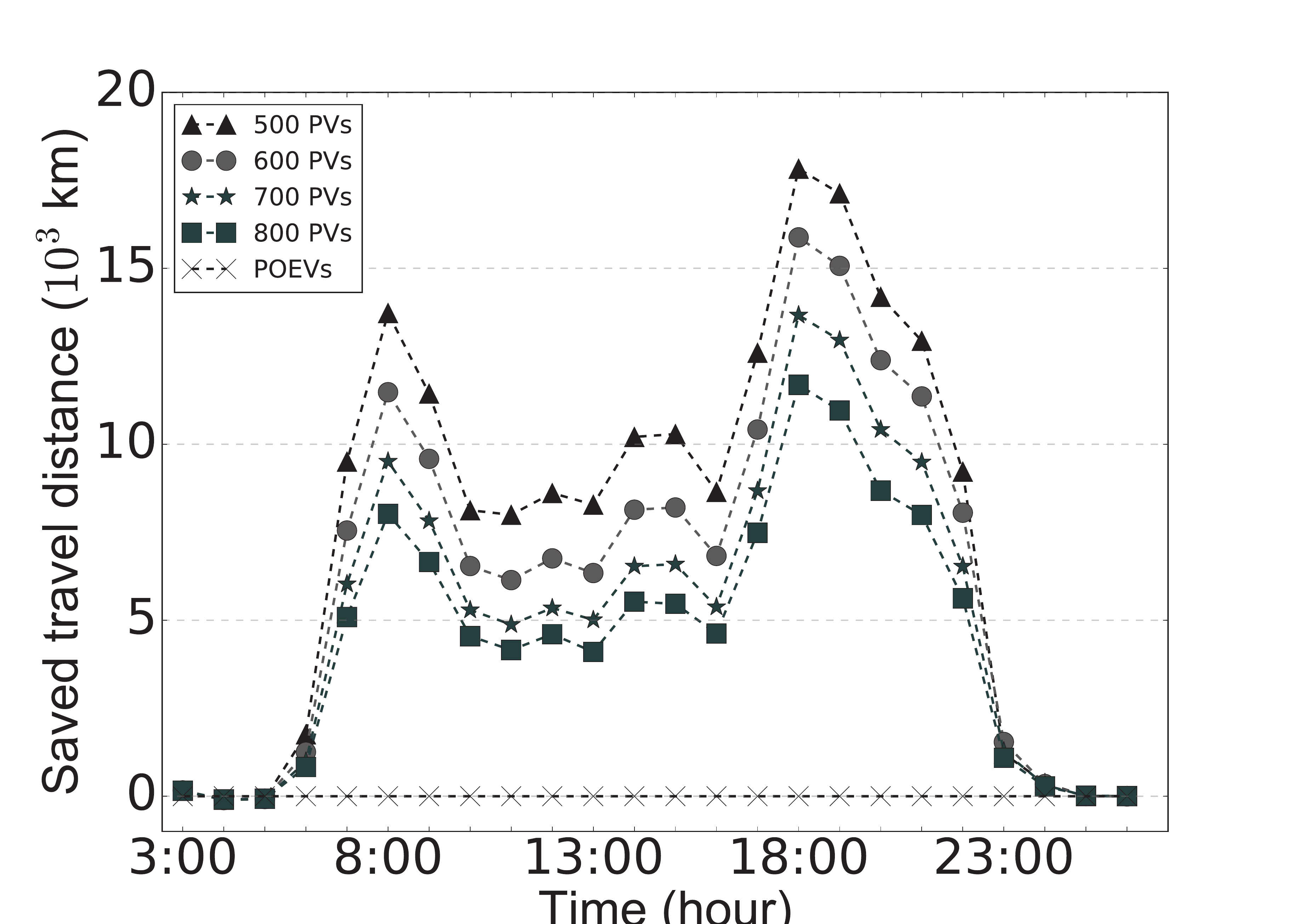}
    \caption{Saved travel distance.}
    \label{Fig:SavedTravelDistance}
  \end{minipage}
\end{figure*}

Next, we put 500$\sim$800 PVs to the system to serve 75,014 requests of the day. The average waiting-travel time of requests is (18.7, 14.1), (3.2, 13.7), (2.5, 13.1) and (2.1, 12.7) minutes using 500, 600, 700, and 800 PVs respectively. This means that if more PVs are put to the system, the waiting and travel time will be less, since the service quality level will improve with less passengers sharing one vehicle.

The \emph{third} metric is the total travel distance, which is plotted by Fig.~\ref{Fig:TravelDistanceComparison}. We see that, when we put more than 500 PVs to serve passengers, the total travel distance of PVs increases slowly since passengers share common paths during their trips. The travel distance of PVs is reduced by 24\%$\sim$39\% compared with that by POEVs when the number of PVs varies from 500 to 800.

The \emph{fourth} metric is the utilization rate, which is depicted by Fig.~\ref{Fig:UtilizationRate}. We can see that, when we put 500 PVs to the system, during 8:00$\sim$24:00, almost all PVs are busy on serving passengers since the number of PVs is too small to serve such a large number of requests. While if we put more PVs, e.g., 800, the utilization rate of PVs almost increases to one during peak time such as 8:00, yet decreases obviously during non-peak time such as 12:00.

The \emph{fifth} metric is the sharing rate, which is presented by Fig.~\ref{Fig:SharingRate}. We see that, the less PVs do we put, the larger the sharing rate will be. At non-peak time such as 4:00, the sharing rate is less than one, since very few passengers share PVs, however, PVs have to travel some distance to pick up them. However, at peak time such as 19:00, the sharing rate improves obviously since more passengers share PVs.

The \emph{sixth} metric is the saved travel distance, which is shown by Fig.~\ref{Fig:SavedTravelDistance}. We can see that, at non-peak time, the saved travel distance is very small, even is negative at 4:00, since very few passengers share PVs at this time. However, at peak time, the saved travel distance increases largely especially at 8:00 and 18:00 since the more passengers join ride-sharing, the more the saved travel distance will be.

Now, we discuss the energy efficiency of PV systems. We assume the travel distance of PVs and POEVs is positively correlated with the energy consumption. From Fig.~\ref{Fig:TravelDistanceComparison}, we can infer that the average energy cost will be dropped by 24\%$\sim$39\%. In PV systems, although passengers may sacrifice some comfort during their trips, the discomfort emerged from ride-sharing is limited since PASP tries to balance the waiting time of passengers and restrict the detour of each passenger to a tolerated scope. Therefore, the QoS of passengers is guaranteed, and the low-cost ride-sharing will attract more passengers to choose PV systems.

Here, we discuss the particular features of PV systems in peak and non-peak time. According to Fig.~\ref{Fig:TravelDistanceComparison} and Fig.~\ref{Fig:SavedTravelDistance} which show the total travel distance and the saved travel distance, respectively, we can infer that, if too many PVs join the system, the profits will be reduced for the sake of parking fees, however, too few PVs can save more travel distance with the cost of degrading the service quality level of passengers. At traffic peak time, the saved travel distance is much more than that in non-peak time, since more passengers share PVs. Therefore, the number of PVs put to the system is an important factor which balances the PV system profits and passenger QoS. We conclude that PV systems are more practical and profitable in traffic peak time of crowd urban areas, however, in traffic non-peak time or suburban areas with few passengers, the performance of PV systems is as good as that in the former scenario.

\section{Discussion and Conclusion} \label{Sec:Conclusion}


To deal with the online/dynamic ride-sharing path planning problem for PV systems, we propose a solution
based on a limited potential search area for each vehicle to filter out the requests that violate passenger QoS constraints such as detour, therefore, the global search is reduced to local search and the computational complexity is reduced. It also considers the comfort of passengers (e.g., waiting time and detour) and the total travel distance of PVs. Therefore, passengers can enjoy their peer-to-peer ride-sharing services with sacrificing a little ride comfort. We also analyze the reduction ratio of computational complexity using the proposed solution, which is related to the current paths and service lists of vehicles, and the locations of origin-destination pairs of unscheduled requests. Through our simulations based on Manhattan taxi data sets, the computation time of the proposed approach is greatly reduced than the exhaustive search method, and the average energy cost for each passenger by PVs is largely reduced compared with that by private owned electric vehicles.

Here, we consider less about the passenger preferences, and in the future we will focus on the ride-sharing path planning methods with more preferences, e.g., the maximum number of shared persons, the pick-up/drop-off point selection, and the point of interest selection. In addition, we will design the ride-sharing path planning solutions aiming at solving the common last mile problem from homes or work places to subway stations or bus stops.


\begin{thebibliography}{10}

\bibitem{litman2013transportationHealth}
T.~Litman, ``Transportation and public health,'' {\em Annual review of public
  health}, vol.~34, pp.~217--233, 2013.

\bibitem{pisarski2008transportation}
A.~E. Pisarski, ``The transportation challenge-moving the us economy (2008),''

\bibitem{CommutingAmerica}
``Commuting in america 2013: The national report on commuting patterns and
  trends (2013).'' \\ \url{http://www.prtctransit.org/docs/Commuting_in_}
  \url{America_2013.pdf}.

\bibitem{zhu2013lane-change}
M.~Zhu, J.~Hu, L.~Kong, W.~Shu, and M.-Y. Wu, ``An algorithm of lane change
  using two-lane nasch model in traffic networks,'' in {\em IEEE International
  Conference on Connected Vehicles and Expo (ICCVE)}, pp.~241--246, 2013.

\bibitem{hystad2013AirPollutionLungCancer}
P.~Hystad, P.~A. Demers, K.~C. Johnson, R.~M. Carpiano, and M.~Brauer,
  ``Long-term residential exposure to air pollution and lung cancer risk,''
  {\em LWW Epidemiology}, vol.~24, no.~5, pp.~762--772, 2013.

\bibitem{zhu2015TrafficEfficiency}
M.~Zhu and M.-Y. Wu, ``Traffic efficiency improvement and passengers comfort in
  ridesharing systems in vanets,'' in {\em IEEE International Conference on
  Connected Vehicles and Expo (ICCVE)}, pp.~116--121, 2015.

\bibitem{BeijingAirQualityIndex}
``Beijing air pollution: Real-time air quality index (aqi) (2017).'' \\
  \url{http://aqicn.org/city/beijing/}.

\bibitem{hamari2016sharing-economy}
J.~Hamari, M.~Sj{\"o}klint, and A.~Ukkonen, ``The sharing economy: Why people
  participate in collaborative consumption,'' {\em Wiley Online Library Journal
  of the Association for Information Science and Technology}, vol.~67, no.~9,
  pp.~2047--2059, 2016.

\bibitem{zhu2016PublicVehicle}
M.~Zhu, X.-Y. Liu, F.~Tang, M.~Qiu, and M.-Y. Wu, ``Public vehicles for future
  urban transportation,'' {\em IEEE Transactions on Intelligent Transportation
  Systems (TITS)}, vol.~17, no.~12, pp.~3344--3353, 2016.

\bibitem{zhu2017JointTransportationCharging}
M.~Zhu, X.-Y. Liu, and X.~Wang, ``Joint transportation and charging scheduling
  in public vehicle systems - a game theoretic approach,'' {\em (under review)
  IEEE Transactions on Intelligent Transportation Systems (TITS)}, 2017.

\bibitem{zhu2016PSA-IWQoS}
M.~Zhu, X.-Y. Liu, M.~Qiu, and M.-Y. Wu, ``Traffic big data based path planning
  strategy in public vehicle systems,'' in {\em IEEE/ACM International
  Symposium of Quality of Service (IWQoS)}, 2016.

\bibitem{zhu2015PublicVehicle}
M.~Zhu, L.~Kong, X.-Y. Liu, and M.-Y. Wu, ``A public vehicle system with
  multiple origin-destination pairs on traffic networks,'' in {\em IEEE Global
  Communications Conference (GLOBECOM)}, pp.~1--6, 2015.

\bibitem{zhu2014ChargingScheduling}
M.~Zhu, X.-Y. Liu, L.~Kong, and M.-Y. Wu, ``The charging-scheduling problem for
  electric vehicle networks,'' in {\em IEEE Wireless Communications and
  Networking Conference (WCNC)}, pp.~3178--3183, 2014.

\bibitem{Uber}
``Homepage of uber (2017).'' \\ \url{https://www.uber.com/}.

\bibitem{Didi}
``Homepage of didi (2017).'' \\ \url{http://www.xiaojukeji.com}.

\bibitem{zhu2016transfer}
M.~Zhu, X.-Y. Liu, M.~Qiu, and M.-Y. Wu, ``Transfer problem in a cloud-based
  public vehicle system with sustainable discomfort,'' {\em Springer Mobile
  Networks and Applications (MONET)}, vol.~21, no.~5, pp.~890--900, 2016.

\bibitem{shang2016collective-travel-planning}
S.~Shang, L.~Chen, Z.~Wei, C.~S. Jensen, J.-R. Wen, and P.~Kalnis, ``Collective
  travel planning in spatial networks,'' {\em IEEE Transactions on Knowledge
  and Data Engineering (TKDE)}, vol.~28, no.~5, pp.~1132--1146, 2016.

\bibitem{jung2016Shared-Taxi}
J.~Jung, R.~Jayakrishnan, and J.~Y. Park, ``Dynamic shared-taxi dispatch
  algorithm with hybrid-simulated annealing,'' {\em Wiley Online Library
  Computer-Aided Civil and Infrastructure Engineering}, vol.~31, no.~4,
  pp.~275--291, 2016.

\bibitem{ma2015taxiRidesharing}
S.~Ma, Y.~Zheng, and O.~Wolfson, ``Real-time city-scale taxi ridesharing,''
  {\em IEEE Transactions on Knowledge and Data Engineering (TKDE)}, vol.~27,
  no.~7, pp.~1782--1795, 2015.

\bibitem{massobrio2016taxi-sharing}
R.~Massobrio, G.~Fag{\'u}ndez, and S.~Nesmachnow, ``Multiobjective evolutionary
  algorithms for the taxi sharing problem,'' {\em Inderscience Publishers (IEL)
  International Journal of Metaheuristics}, vol.~5, no.~1, pp.~67--90, 2016.

\bibitem{naoum2015stochastic}
J.~Naoum-Sawaya, R.~Cogill, B.~Ghaddar, S.~Sajja, R.~Shorten, N.~Taheri,
  P.~Tommasi, R.~Verago, and F.~Wirth, ``Stochastic optimization approach for
  the car placement problem in ridesharing systems,'' {\em Elsevier
  Transportation Research Part B: Methodological}, vol.~80, pp.~173--184, 2015.

\bibitem{goel2017ridesharing}
P.~Goel, L.~Kulik, and K.~Ramamohanarao, ``Optimal pick up point selection for
  effective ride sharing,'' {\em IEEE Transactions on Big Data (TBDATA)},
  vol.~3, no.~2, pp.~154--168, 2017.

\bibitem{ota2016TaxiRideSharing}
M.~Ota, H.~Vo, C.~Silva, and J.~Freire, ``Stars: Simulating taxi ride sharing
  at scale,'' {\em IEEE Transactions on Big Data (TBDATA)}, 2016.

\bibitem{polson2017traffic-prediction}
N.~G. Polson and V.~O. Sokolov, ``Deep learning for short-term traffic flow
  prediction,'' {\em Elsevier Transportation Research Part C: Emerging
  Technologies}, vol.~79, pp.~1--17, 2017.

\bibitem{masmoudi2017DARP-GA}
M.~A. Masmoudi, K.~Braekers, M.~Masmoudi, and A.~Dammak, ``A hybrid genetic
  algorithm for the heterogeneous dial-a-ride problem,'' {\em Elsevier
  Computers \& Operations Research}, vol.~81, pp.~1--13, 2017.

\bibitem{TeslaModelS}
``Homepage of tesla model-s (2017).'' \\
  \url{http://my.teslamotors.com/models/design}.

\bibitem{Openstreetmap}
``Homepage of openstreetmap (2017).'' \\ \url{http://www.openstreetmap.org/}.

\bibitem{NewYorkCityTaxiData}
``New york city taxi and limousine commission (tlc) trip data (2017).'' \\
  \url{http://www.nyc.gov/html/tlc/html/about/trip_record_data.shtml}.

\bibitem{tian2016ChargingStationRecommendation}
Z.~Tian, T.~Jung, Y.~Wang, F.~Zhang, L.~Tu, C.~Xu, C.~Tian, and X.-Y. Li,
  ``Real-time charging station recommendation system for electric-vehicle
  taxis,'' {\em IEEE Transactions on Intelligent Transportation Systems
  (TITS)}, vol.~17, no.~11, pp.~3098--3109, 2016.

\bibitem{atasoy2015FMOD}
B.~Atasoy, T.~Ikeda, X.~Song, and M.~E. Ben-Akiva, ``The concept and impact
  analysis of a flexible mobility on demand system,'' {\em Elsevier
  Transportation Research Part C: Emerging Technologies}, vol.~56,
  pp.~373--392, 2015.

\end{thebibliography}

\begin{IEEEbiography}[{\includegraphics[width=1in,height=1.21in,clip,keepaspectratio]{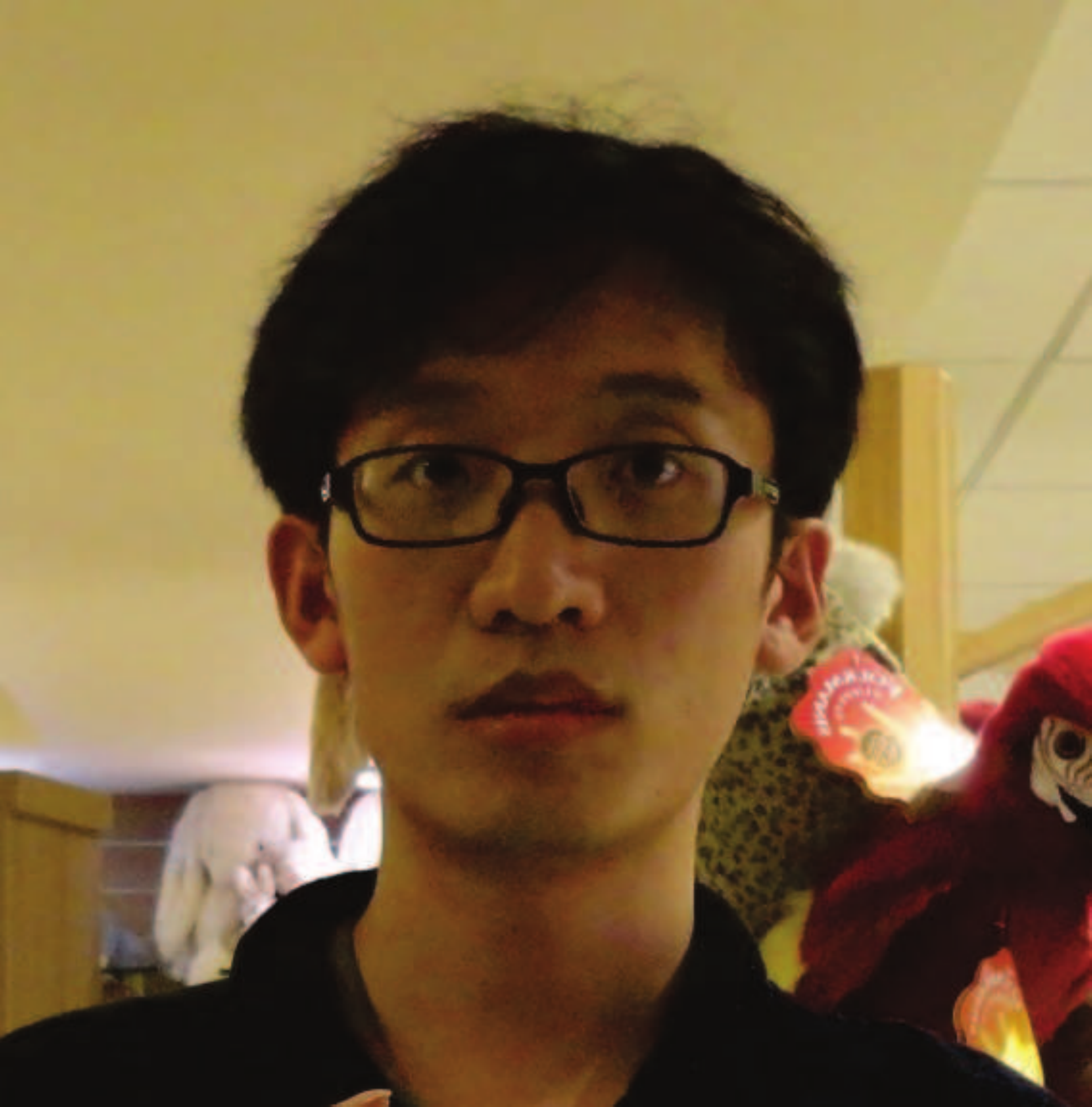}}]\\~~{\textbf{Ming~Zhu}} is now a postdoctoral researcher and assistant researcher in Shenzhen Institutes of Advanced Technology, Chinese Academy of Sciences, Shenzhen, China. He received the PhD degree in Computer Science and Engineering in Shanghai Jiao Tong University, Shanghai, China. A part of this work is finished in Shanghai Jiao Tong University.

His research interests are in the area of big data, artificial intelligence, internet of things, and wireless communications.
\end{IEEEbiography}

\begin{IEEEbiography}[{\includegraphics[width=1.40in,height=1.07in,clip,keepaspectratio]{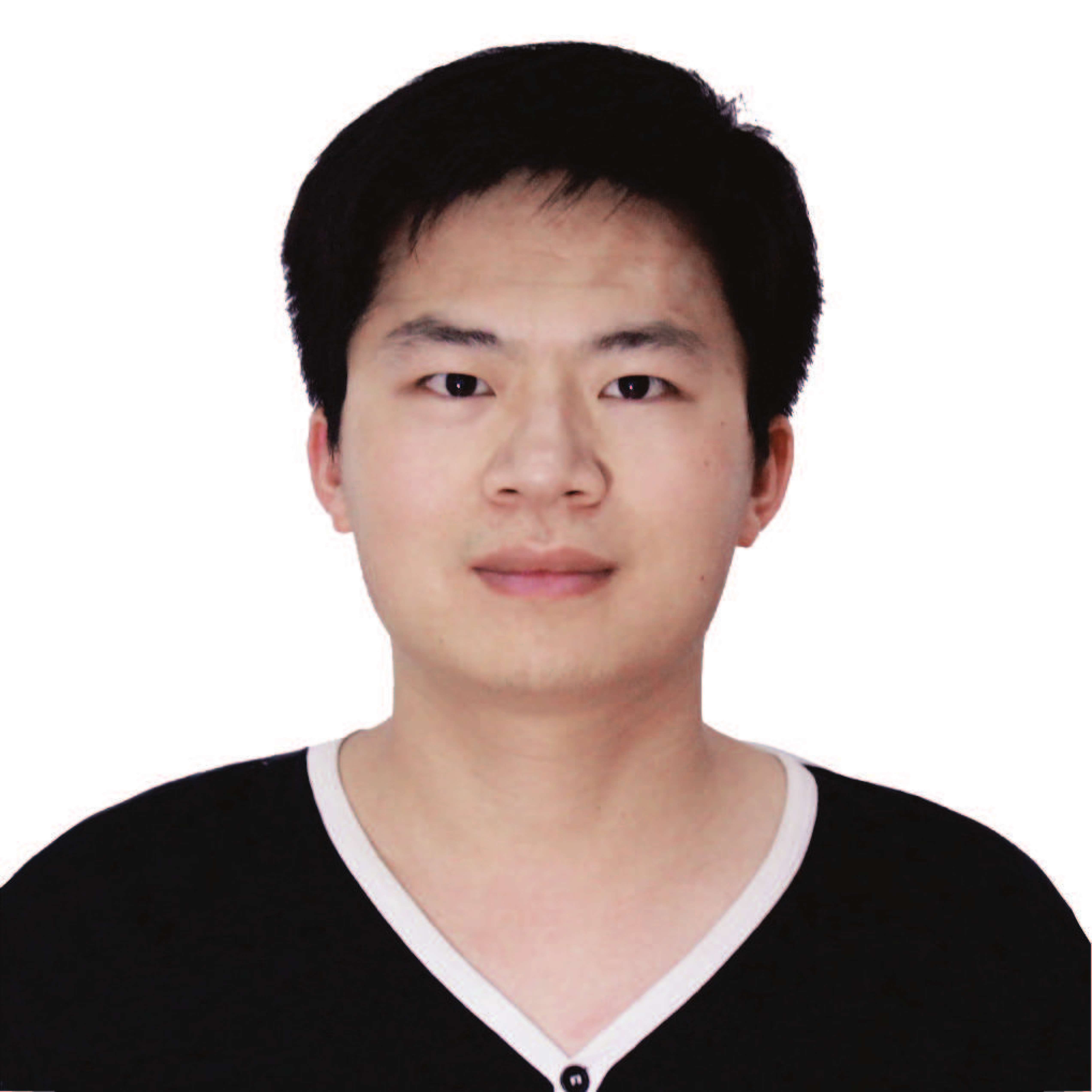}}]{Xiao-Yang~Liu} received his B.Eng. degree in computer science from Huazhong University of Science and Technology, China, in 2010. He is currently a joint PhD in the Department of Electrical Engineering, Columbia University, and in the Department of Computer Science and Engineer, Shanghai Jiao Tong University.

His research interests include tensor theory, deep learning, nonconvex optimization, big data analysis and homomorphic encryption, cyber-security and wireless communication.
\end{IEEEbiography}

\begin{IEEEbiography}[{\includegraphics[width=1in,height=1.21in,clip,keepaspectratio]{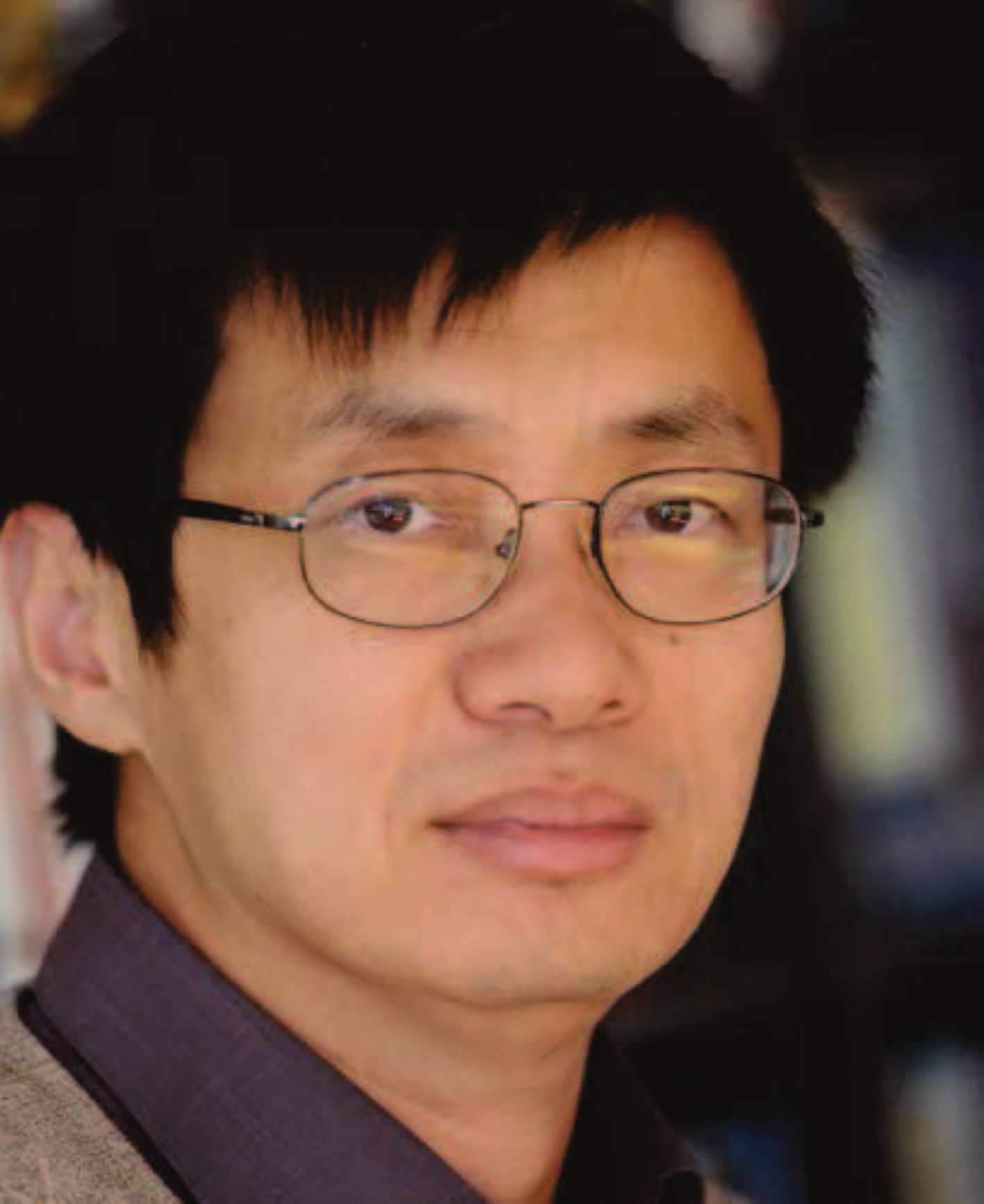}}]{Xiaodong~Wang} (S'98-M'98-SM'04-F'08) received the Ph.D. degree in electrical engineering from Princeton University. He is currently a Professor of electrical engineering with Columbia University, New York NY, USA. His research interests fall in the general areas of computing, signal processing, and communications. He has authored extensively in these areas. He has authored the book entitled Wireless Communication Systems: Advanced Techniques for Signal Reception, (Prentice Hall, 2003). His current research interests include wireless communications, statistical signal processing, and genomic signal processing. He has served as an Associate Editor of the IEEE TRANSACTIONS ON COMMUNICATIONS, the IEEE TRANSACTIONS ON WIRELESS COMMUNICATIONS, the IEEE TRANSACTIONS ON SIGNAL PROCESSING, and the IEEE TRANSACTIONS ON INFORMATION THEORY. He is an ISI Highly Cited Author. He received the 1999 NSF CAREER Award, the 2001 IEEE Communications Society and Information Theory Society Joint Paper Award, and the 2011 IEEE Communication Society Award for Outstanding Paper on New Communication Topics.
\end{IEEEbiography}

\end{document}